\documentclass[final]{hooml2022} 
 
\usepackage[textsize=tiny]{todonotes}
\usepackage{algorithm,algorithmic}

\newcommand{\ls}{\left(}
\newcommand{\rs}{\right)}
\newcommand{\lb}{\left\lbrace}
\newcommand{\rb}{\right\rbrace}
\newcommand{\lp}{\left[}
\newcommand{\rp}{\right]}
\newcommand{\la}{\left\langle}
\newcommand{\ra}{\right\rangle}
\newcommand{\EE}[2]{{\mathbb E}_{#1}\left[#2\right] } 
\newcommand{\R}{\mathbb{R}}
\newcommand{\B}{\widetilde{B}}
\newcommand{\eqdef}{:=}
\newtheorem{assumption}{Assumption}

\newcommand{\Y}{\widetilde{Y}}
\newcommand{\bY}{\widetilde{Y}}
\newcommand{\A}{\mathcal{A}}

\def\bk{\bar \kappa}
\newcommand{\wg}{\widetilde{g}}
\newcommand\numberthis{\addtocounter{equation}{1}\tag{\theequation}}

\newenvironment{customthm}[1]
  {\innercustomthm}
{\endinnercustomthm}

\newenvironment{customassum}[1]
  {\innercustomassum}
{\endinnercustomassum}

\newenvironment{customcol}[1]
  {\innercustomcol}
{\endinnercustomcol}

\def\aa#1{{\color{black}#1}} 
\def\br#1{{\color{black}#1}} 

\def\algo{FLECS-CG{D} }

\title{FLECS-CGD: A Federated Learning Second-Order Framework via Compression and Sketching with Compressed Gradient Differences}



\optauthor{%
\Name{Artem Agafonov} \Email{agafonov.ad@phystech.edu}\\
\addr Mohamed bin Zayed University of Artificial Intelligence, Abu Dhabi, UAE\\
\addr Moscow Institute of Physics and Technology, Dolgoprudny, Russia
\AND
\Name{Brahim Erraji} \Email{brahim.erraji@mbzuai.ac.ae}\\
\addr Mohamed bin Zayed University of Artificial Intelligence, Abu Dhabi, UAE
\AND
\Name{Martin Tak\'a\v{c}} \Email{martin.takac@mbzuai.ac.ae}\\
\addr Mohamed bin Zayed University of Artificial Intelligence, Abu Dhabi, UAE
}

\begin{document}

\maketitle
\vspace{-14mm}
\begin{abstract}%
In the recent paper FLECS (Agafonov et al, FLECS: A Federated Learning Second-Order Framework via Compression and Sketching), the second-order framework FLECS was proposed for the Federated Learning problem. This method utilize compression of sketched Hessians to make communication costs low. However, the main bottleneck of FLECS is gradient communication without compression. In this paper, we propose the modification of FLECS with compressed gradient differences, which we call \algo (FLECS with \textbf{C}ompressed \textbf{G}radient \textbf{D}ifferences) and make it applicable for stochastic optimization. Convergence guarantees are provided in strongly convex and nonconvex cases. Experiments show the practical benefit of proposed approach.
\end{abstract}


\section{Introduction}
    In this paper, we focus on the stochastic federated learning problem, where the objective function is the empirical loss of overall $n$ workers:
    \vspace{-2mm}
    \begin{equation}\label{eq:problem}
        \min\limits_{w \in \R^d} \lb F(w):= \frac{1}{n}\sum\limits_{i=1}^n f_i(w)\rb,
    \end{equation}
    \vspace{-0.75mm}
    where 
    $f_i(w) = \EE{\xi \sim \mathcal{D}_i}{f(w, \xi)},$
    with $F$ being a general loss function (parametrized by $w \in \R^d$ and $\xi$) associated with the data stored on the $i$-th machine, $n$ is number of machines. The
    distributions $\mathcal{D}_{1},\ldots,\mathcal{D}_{n}$ may differ on each machine,
    so means the functions $f_1, \ldots, f_n$ can have completely
    different minimizers. In particular, $\EE{\xi \sim \mathcal{D}_i}{\nabla f_i(w^*, \xi)} \neq 0$, where $w^*$ is an optimal solution of \eqref{eq:problem}.
    
    Problems of this nature are ubiquitous and rise naturally whenever multiple computing nodes can be connected \cite{LI2020106854,https://doi.org/10.48550/arxiv.1811.03604,DBLP:journals/corr/McMahanMRA16}. For example, such problems arise in distributed machine learning, robotics, resource allocation, optimal transport, and other applications \cite{kra13,jaggi2014communication,mousavi2019multi,smith2018cocoa, li2014scaling, xia06, rab04, nedic2017fast, dvurechenskii2018decentralize,richtarik2016distributed,marecek2014distributed,ma2017distributed, uribe2018distributed, ram2009distributed}.

    The most popular approaches for the problem \eqref{eq:problem} might be first-order algorithms. Early work in that field includes \cite{Konecny2016a,Konecny2016b,McMahan2017}. To reduce communication burden  methods with gradient compression were proposed  \cite{alistarh2017qsgd,horvath2019natural,mishchenko2019distributed,chen2022distributed}.  Other techniques like the usage of momentum \cite{Mishchenko2019,Li2020accFL}, variance reduction \cite{Horvath2019,nguyen2021inexact,nguyen2017sarah},
    and adaptive learning rates
    \cite{shi2021ai}
    were recently proposed in the literature.
    
    Second-order methods are also proposed for the Federated Learning setup. Generally, these methods can be divided into two groups based on heterogeneous/homogeneous data setting assumptions. Algorithms in the first group \cite{Zhang2018,dvurechensky2021hyperfast,daneshmand2021newton,bullins2021stochastic, agafonov21acc} usually utilize statistical similarity, which means that the local function $f_i$ approximates the global objective $F$ well. Methods  FedNL \cite{safaryan2021fednl} and FLECS \cite{agafonov2022flecs} work in truly heterogeneous setup, which makes them more practical. However, FedNL seems impractical for large-scale problems because of high memory requirements  on devices.    Indeed, in FedNL it is assumed that each device (e.g. mobile phone) should store $d \times d$ Hessian approximation locally, which is impossible for large $d$. FLECS tackles this problem by using a sketching technique and switching memory costs from machines to the server. Therefore, FLECS \aa{does not require storing Hessians locally}.
    In both FedNL and FLECS, compression is applied only to the (sketched) difference of Hessian and its approximation. The goal of this work is to add gradient compression to FLECS. That allows reducing communication complexity. Moreover, compared to \cite{agafonov2022flecs}, we show that FLECS and CG-FLECS work in the general stochastic distributed optimization problem.
    \vspace{-2mm}
    \paragraph{Contribution}
    We briefly describe our contributions below. First of all, we make FLECS appliable to the the stochastic federated learning. Secondly, we propose \algo --  the modification of FLECS with gradient compression. This improves communication complexity from $O(cmd + {32}d + \aa{32m^2})$ (float32) to $O(cmd + cd + \aa{32m^2})$, where $m$ is a user-defined memory size and $c$ is a number of bits per one value after compression (typically $c \ll 32$). Thirdly, we provide theoretical convergence guarantee in non-convex and strongly convex cases. Finally, our numerical experiments show practical benefit of the proposed approach.
    \vspace{-2mm}
    \paragraph{Organization}
    The rest of the paper is organized as follows. In Section \ref{sec:prel}, we introduce main notations and definitions. Then, In Section \ref{sec:algo} we present our framework FLECS with compressed gradients for stochastic distributed optimization problem  \eqref{eq:problem}. Section \ref{sec:theory} is dedicated to the convergence analysis of the proposed method (all proofs can be found in appendix). Finally, numerical experiments are provided in Section \ref{sec:experiments}.
\vspace{-3mm}    
\section{Preliminaries}\label{sec:prel}
\begin{definition}
    A differentiable function $F: \R^d \to \R^d$ is called $\mu$-strongly convex for $\mu > 0$, if for all $x, ~ y \in \R^d:~~~$
    $
        F(x) + \langle \nabla F(x), y - x\rangle + \frac{\mu}{2}\|x-y\|^2 \leq F(y).
    $
\end{definition}

\begin{definition}
    A differentiable function $F: \R^d \to \R^d$ is called $L$-smooth for $L > 0$, if for all $x, ~ y  \in \R^d:~~~$
    $
        F(y) \leq  F(x) + \langle \nabla F(x), y - x\rangle + \frac{L}{2}\|x-y\|^2. 
    $
\end{definition}

\begin{definition}
    By $\mathcal{U}(\omega)$ ($\omega > 0$) we define the class of unbiased compression operators $Q:~\R^d \to \R^d$ satisfying 
    \begin{equation}\label{eq:quantization_def}
        \EE{Q}{Q(x)} = x, ~  ~ \EE{Q}{\|Q(x)\|^2} \leq (\omega + 1)\|x\|^2,
    \end{equation}
    for all $x \in \R^d$.
\end{definition}


\vspace{-3mm}
\section{\algo: FLECS with Gradient Compression}\label{sec:algo}

 In this section, we describe the main steps of the proposed method.\algo is a modification of FLECS with gradient compression. \algo is listed as Algorithm \ref{alg:main}. Detailed information about FLECS can be found in the original article \cite{agafonov2022flecs}.
 
 The algorithm is initialized with user-defined memory size $m \ll d$, $n$ vectors $h^i_0 \in \R^d$ and $n$ matrices $B^i_0 \in \R^{d \times d}$. Each $h^i_0$ ($B_0^i$) represents an approximation to the local gradient (Hessian) for the $i$-th worker.
 
 In the beginning of iteration $k$ each worker receives $B_k^iS_k, ~ w_k$ from the server. Then, $i$-th worker samples $S_k \in \R^{d\times m}$, $g_k^i$ such that $\EE{}{g_k^i \vert w_k} = \nabla f_i(w_k)$, $H_k^iS_k$ such that $\EE{}{H_k^i S_k \vert w_k, S_k} = \nabla^2 f_k^i(w_k)S_k$.  Note that $S_k$ is the same for all machines and the server; we guarantee it by setting the random seed to be equal to the iteration number $k$.
 
 Next, to utilize error-feedback technique each worker calculates 
 \begin{equation}
     c_k^i = {Q}(g_k^i - h_k^i) \in \R^d,~ M_k^i=S_k^TY_k^i \in \R^{m \times m},~ C_k^i = \mathcal{C}(Y_k^i - B_k^i S_k) \in \R^{d \times m}.
 \end{equation}
Then the $i$-th worker sends compressed differences $c_k^i, ~ C_k^i$ and $M_k^i$ to the server. 

The server receives  $c_k^i, ~ C_k^i, ~M_k^i$ from all $i=1, \ldots, n$ workers. Firstly, $\wg_k^i = c_k^i + h_k^i$ and $\Y_k^i = C_k^i + B_k^iS_k$ are computed. Then,  the server computes new Hessian approximation $B_{k+1}^i, ~ i=1, \ldots, n$ via Truncated L-SR1 update (Algorithm \ref{alg:hess_lsr1}) or Direct update (Algorithm \ref{alg:hess_direct}).  

At the very end of $k$-th iteration the server forms
\begin{align*}
      \wg_k &= \tfrac{1}{n} \textstyle{\sum} _{i=1}^n \wg_k^i,  &
      \widetilde{Y}_k &\eqdef \tfrac{1}{n}\textstyle{\sum} \Y_k^i = \tfrac{1}{n}\textstyle{\sum} _{i=1}^n (C_k^i + H_k^i S_k), \\
      M_k &\eqdef \tfrac{1}{n} \textstyle{\sum}_{i=1}^n M_k^i = S_k \nabla^2 F(w_k)S_k^T, 
      &
      B_{k+1} &\eqdef \tfrac{1}{n} \textstyle{\sum}_{i=1}^n B_{k+1}^i.
\end{align*}
Finally, the main node calculates new iterate $w_{k+1}$ via update rule $
            w_{k+1} = w_k + \alpha_k p_k
$
where $\alpha_k > 0$ is the step-size. Search direction $p_k$ can be computed via truncated inverse Hessian approximation step (Algorithm \ref{alg:approx_trunc}) or via FedSONIA (Algorithm \ref{alg:approx_fedsonia}) step.
\vspace{-2mm}
\paragraph{Communication complexity}
Omitting both gradient and matrix compressions communication complexities per node of FLECS and FLECS-CGD are the same. Both algorithms need to send $d$ dimensional vector, one $m \times m$ matrix and one $d \times m$ matrix. However, when using compression, the situation is different.
 Assuming that float data type is used, FLECS-CG reduces communication complexity  of FLECS $O(cmd + {32}d + \aa{32m^2})$ to $O(cmd + cd + 32\aa{m^2})$, where $c$ is number of bits per digit. It is important for the practical case of small memory sizes $m$. Indeed, if we set $m=1$, then FLECS-CG communication complexity is $O(cd)$ which is much smaller than FLECS's $O(32d)$.
\vspace{-2mm}
\paragraph{Step complexity \cite{agafonov2022flecs}} The the worker step's complexity consists of $m$ Hessian-vector products and matrix multiplication $(O(md^2))$. The total complexity of either Hessian approximation (Algorithms \ref{alg:hess_lsr1}, \ref{alg:hess_direct}) update is $O(nmd^2)$ . So the server step's complexity depends on options for the search direction: $O(d^3 + nmd^2)$ for Truncated Inverse Hessian approximation (Algorithm \ref{alg:approx_trunc}) and $O(nmd^2)$ for FedSONIA (Algorithm \ref{alg:approx_fedsonia}) .

    \begin{algorithm}[t]
           \caption{\algo}\label{alg:main}
            \begin{algorithmic}[1]
            \REQUIRE $w_0$~-- starting point,
            $m$~-- memory size, $B_0^i$~-- initial Hessian approximations for each worker $i=1\ldots n$ on the server, $0 < \omega < \Omega$ -- truncation constants.
            \FOR{$k=0,1,\ldots$}
            \STATE \textbf{On $i$-th machine:}
            \STATE collect $B_k^iS_k, ~ w_k$ from the server;  
            \STATE sample $S_k \in \R^{d\times m}$, \aa{$g_k^i$ such that $\EE{}{g_k^i \vert w_k} = \nabla f_i(w_k)$, $ H_k^i S_k$ such that $\EE{}{H_k^i S_k \vert w_k, S_k} = \nabla^2  f_i(w_k)S_k$}; 
            \STATE let $Y_k^i:= H_k^iS_k$, and compute $M_k^i:= S_k^T Y_k^i$;
            \STATE send $\aa{c_k^i = {Q}(g_k^i - h_k^i)},~ M_k^i,~ C_k^i = \mathcal{C}(Y_k^i - B_k^i S_k)$ to the server.
            \STATE \aa{select stepsize $\gamma_k$ and update $h^i_{k+1} := h^i_k + \gamma_k {c}_k^i$}
            \STATE \textbf{On the server:}
            \STATE sample $S_k$;
            \STATE collect $C_k^i,~ M_k^i,~ \aa{c_k^i} ~ i=1\ldots n$ from workers;
            \STATE compute $\aa{\widetilde{g}_k^i = c_k^i + h_k^i}, ~ \bY_k^i = C_k^i + B_k^iS_k$;
            \STATE compute $B_{k+1}^i$ via Algorithm \ref{alg:hess_lsr1} or select learning rate $\beta
            _k$ and compute  $B_{k+1}^i$ via Algorithm \ref{alg:hess_direct};
            \STATE form $\aa{\wg_k}, \bY_k, M_k, B_{k+1}$ as average over workers of $ \aa{\wg_k^i}, \bY_k^i, M^i_k, B^i_{k+1}, ~ i=1, \ldots, n$;
            \STATE compute search direction $p_k$ via Algorithm \ref{alg:approx_trunc} or \ref{alg:approx_fedsonia};
            \STATE select stepsize $\alpha_k$ and set $w_{k+1} = w_k + \alpha_k p_k$;
            \STATE sample $S_{k+1} \in \R^{d\times m}$;
            \STATE send $w_k, B_{k+1}S_{k+1}$ to all workers.
            \ENDFOR
        \end{algorithmic}
    \end{algorithm}
\vspace{-5mm}
\section{\algo: Convergence Analysis}\label{sec:theory}
    In this section, we present the convergence theory for \algo. All proofs can be found in Appendix \ref{app:experiments}. Let $w_0$ be an initial point and $w^*$ be a solution: 
    $w^{\star} = \arg \min \limits_{w \in \R^d} F(w), ~ \text{and}~F^{\star} = F(w^{\star})$.\\ 
    Before proving the convergence of \algo for different classes of functions, we will cite a few key assumptions and lemmas.
    \vspace{-2mm}
    \begin{assumption}\label{assum:diff}
            The function $F$ is twice continuously differentiable.
    \end{assumption}
    \vspace{-2mm}
    First, we focus on strongly convex case and present assumptions for this setting.
    \vspace{-2mm}
    \begin{assumption}\label{assum:strong_conv}
        Each function $f_i(w)$ is $\mu$-strongly convex and $L$-smooth
        $
            \mu I \preceq \nabla^2 f_i(w) \preceq L I.
        $
    \end{assumption}
    
    \vspace{-5mm}
    \begin{assumption}\label{assum:variance}
        Each $g_k^i$ in Algorithm \ref{alg:main} has bounded variance 
            $
            \EE{}{\|g_k^i - \nabla f_i(w_k)\|} \leq \sigma_i^2,$ \\
            $ \forall k \geq 0, ~ i = 1, \ldots, n~~$
        for constants $\sigma_i < \infty$, $\sigma^2 \eqdef \frac{1}{n} \sum \limits_{i=1}^n \sigma_i^2$.
    \end{assumption}
    \vspace{-4mm}
    The following theorem establishes global linear convergence of \algo under strong convexity.
    \vspace{-3mm}
    \begin{theorem}\label{thm:str_conv}
        Suppose that Assumption \ref{assum:diff}, \ref{assum:strong_conv}, \ref{assum:variance} holds. Let $Q \in \mathcal{U}(\omega)$.
         Let $\{w_k\}$ be the iterates generated by Algorithm~\ref{alg:main}, where $0 <  \alpha_k = \alpha  \leq \frac{5\mu \mu_1}{2L^2\mu_2^2\left(1+\frac{\omega}{n}\right)}$ and $0 < \gamma_k = \gamma \leq \frac{1}{\omega + 1}$. Define the Lyapunov function
         $\Psi_{k+1} = (F(w_{k+1}) - F(w_*)) + \frac{cL\mu_2^2\alpha^2}{n}\sum \limits_{i=1}^n \EE{Q}{\|h_{k+1}^i - h_*^i\|^2}$
         for \\
         $0 < c = \min \lb \frac{1- \frac{\alpha\mu\mu_1}{2} - \frac{\omega}{n}}{1 - \gamma}; \frac{\mu}{2\gamma L}\rb$. Then for all $k \geq 0$:
         \begin{equation}
             \EE{Q}{\Psi_{k}} \leq \left(1 - \frac{\alpha \mu\mu_1}{2}\right)^{k+1}\Psi_0 + \left(\frac{\omega + 1}{2n} + \gamma c\right)\frac{2L \mu_2^2 \alpha}{\mu \mu_1}  \sigma^2.
         \end{equation}
    \end{theorem}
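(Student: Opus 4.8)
\emph{Proof plan.} The plan is to run the standard two-sequence Lyapunov argument used for shifted-compression (DIANA-type) methods on top of an inexact preconditioned-gradient descent estimate for the objective. The natural reference points for the per-worker shifts are $h_*^i \eqdef \nabla f_i(w_*)$, so that $\tfrac1n\sum_{i=1}^n h_*^i = \nabla F(w_*) = 0$; with this choice, by unbiasedness of $Q$ and of the $g_k^i$'s, the averaged compressed stochastic gradient $\wg_k = \tfrac1n\sum_{i=1}^n\ls Q(g_k^i-h_k^i)+h_k^i\rs$ satisfies $\EE{}{\wg_k \mid w_k} = \nabla F(w_k)$, and its conditional variance will be driven by $\sigma^2$ and by the quantity $G_k \eqdef \tfrac1n\sum_{i=1}^n\EE{}{\|h_k^i - h_*^i\|^2}$ that appears inside the Lyapunov function. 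I would carry the whole argument through the two scalar sequences $D_k \eqdef \EE{}{F(w_k) - F(w_*)}$ and $G_k$, prove a one-step recursion for each, and then combine them using $\Psi_k = D_k + cL\mu_2^2\alpha^2 G_k$. Throughout I would use that the three sources of randomness (the compression operators $Q$, the stochastic gradients $g_k^i$, and the Hessian-related randomness defining the preconditioner $H_k$) are mutually independent given $w_k$, so that conditional expectations can be taken in the right order.

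\emph{Step 1: one-step descent for $D_k$.} Apply the descent lemma for the $L$-smooth function $F$ (Assumptions~\ref{assum:diff},~\ref{assum:strong_conv}) along $w_{k+1} = w_k + \alpha p_k$. Both options for the search direction (Algorithms~\ref{alg:approx_trunc} and~\ref{alg:approx_fedsonia}) produce $p_k = -H_k\wg_k$ with a preconditioner satisfying $\mu_1 I \preceq H_k \preceq \mu_2 I$ and independent of the gradient noise given $w_k$; hence, in conditional expectation, the first-order term contributes at most $-\alpha\mu_1\|\nabla F(w_k)\|^2$ and the second-order term at most $\tfrac{L\alpha^2\mu_2^2}{2}\EE{}{\|\wg_k\|^2} = \tfrac{L\alpha^2\mu_2^2}{2}\ls\|\nabla F(w_k)\|^2 + \EE{}{\|\wg_k - \nabla F(w_k)\|^2}\rs$. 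The key estimate is the variance bound: decompose $\wg_k^i - \nabla f_i(w_k) = \ls Q(g_k^i - h_k^i) - (g_k^i - h_k^i)\rs + \ls g_k^i - \nabla f_i(w_k)\rs$, use independence of the compression and stochastic noises across the $n$ workers, the second-moment bound~\eqref{eq:quantization_def} of $Q$, Assumption~\ref{assum:variance}, and finally bound $\|g_k^i - h_k^i\|^2$ by routing through $h_*^i$ and invoking convexity/$L$-smoothness of the $f_i$ (in particular $\tfrac1n\sum_i\|\nabla f_i(w_k) - \nabla f_i(w_*)\|^2 \le 2L\,(F(w_k)-F(w_*))$). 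This yields $\EE{}{\|\wg_k - \nabla F(w_k)\|^2} \le \tfrac{\omega+1}{n}\sigma^2 + \tfrac{\omega}{n}\ls c_1 D_k + c_2 G_k\rs$ for absolute constants $c_1,c_2$. Bounding $\|\nabla F(w_k)\|^2 \le 2LD_k$ in the second-order term (this is what produces the $L^2$ and the $1+\tfrac{\omega}{n}$ in the step-size condition), using the Polyak--{\L}ojasiewicz inequality $\|\nabla F(w_k)\|^2 \ge 2\mu D_k$ to extract a contraction from the first-order term, and restricting $\alpha$ below the stated threshold, one obtains
\begin{equation*}
 \EE{}{D_{k+1}} \;\le\; \ls 1 - \tfrac{\alpha\mu\mu_1}{2}\rs D_k \;-\; \delta\, D_k \;+\; b\,G_k \;+\; \tfrac{L\mu_2^2\alpha^2(\omega+1)}{2n}\,\sigma^2,
\end{equation*}
where $b = \Theta\!\ls\tfrac{\omega L\mu_2^2\alpha^2}{n}\rs$ and the extra slack $\delta = \Theta(\alpha\mu\mu_1)$ is exactly the room that will absorb the coupling term from the shift recursion.

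\emph{Step 2: shift recursion and assembly.} For the shifts, expand $\|h_{k+1}^i - h_*^i\|^2 = \|h_k^i - h_*^i + \gamma Q(g_k^i - h_k^i)\|^2$, take expectation over $Q$ using unbiasedness and~\eqref{eq:quantization_def}, and use $\gamma(\omega+1)\le 1$ together with $2\langle a,b\rangle \le \|a\|^2+\|b\|^2$ to get the familiar $\EE{}{\|h_{k+1}^i - h_*^i\|^2} \le (1-\gamma)\|h_k^i - h_*^i\|^2 + \gamma\|g_k^i - h_*^i\|^2$; averaging over $i$ and bounding $\EE{}{\|g_k^i - h_*^i\|^2}$ via Assumption~\ref{assum:variance} and the smoothness--Bregman inequality above gives $\EE{}{G_{k+1}} \le (1-\gamma)G_k + d\,D_k + \gamma\sigma^2$ with $d = \Theta(\gamma L)$. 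Plugging both inequalities into $\Psi_k = D_k + \rho G_k$ with $\rho = cL\mu_2^2\alpha^2$ produces $\EE{}{\Psi_{k+1}} \le \ls 1 - \tfrac{\alpha\mu\mu_1}{2} - \delta + \rho d\rs D_k + \ls b + \rho(1-\gamma)\rs G_k + \ls\tfrac{L\mu_2^2\alpha^2(\omega+1)}{2n} + c\gamma L\mu_2^2\alpha^2\rs\sigma^2$. The two conditions "$b + \rho(1-\gamma) \le \rho(1-\tfrac{\alpha\mu\mu_1}{2})$" and "$\rho d \le \delta$" are precisely what the prescribed bound on $\alpha$ and the two branches of $c = \min\{\cdot,\cdot\}$ (the first branch controlling the $G_k$-coupling, the second branch $\tfrac{\mu}{2\gamma L}$ controlling the $D_k$-coupling $\rho d$) are designed to enforce; the $\tfrac{\omega}{n}$ in the numerator of $c$ and the factor $\tfrac52$ in the step-size bound are the exact book-keeping constants that make these inequalities close. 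One then gets $\EE{}{\Psi_{k+1}} \le (1-\tfrac{\alpha\mu\mu_1}{2})\EE{}{\Psi_k} + L\mu_2^2\alpha^2\sigma^2\ls\tfrac{\omega+1}{2n}+\gamma c\rs$, and unrolling this geometric recursion and summing the constant noise term (dividing by $\tfrac{\alpha\mu\mu_1}{2}$) gives exactly the displayed bound, with the stated noise floor $\ls\tfrac{\omega+1}{2n}+\gamma c\rs\tfrac{2L\mu_2^2\alpha}{\mu\mu_1}\sigma^2$.

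\emph{Main obstacle.} The routine part is the descent lemma and the DIANA-style expansion; the delicate part is the simultaneous balancing in Step~2 — choosing $\rho$ (equivalently $c$) and the step size so that the off-diagonal coupling coefficients $b$ (compression variance feeding into $D_{k+1}$, of order $\omega L\alpha^2\mu_2^2/n$) and $d$ (stochastic gradient dispersion around $h_*^i$ feeding into $G_{k+1}$, of order $\gamma L$) are each dominated by the available contraction room in the other sequence, which forces the precise two-term form of $c$ and the specific step-size threshold. A secondary point that must be handled carefully is the order of conditional expectations (over $Q$, over the $g_k^i$, and over the randomness defining $H_k$), so that unbiasedness of $\wg_k$ and the independence of $H_k$ from the gradient noise can legitimately be used.
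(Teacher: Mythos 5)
Your plan is correct and follows essentially the same route as the paper's proof: the descent lemma combined with the spectral bounds $\mu_1 I \preceq A_k \preceq \mu_2 I$, the DIANA-style variance bound for $\wg_k$ routed through $h_*^i=\nabla f_i(w_*)$ together with the smoothness--Bregman and strong-convexity inequalities, the shift recursion $\EE{Q}{\|h_{k+1}^i-h_*^i\|^2}\le(1-\gamma)\|h_k^i-h_*^i\|^2+\gamma\|\nabla f_i(w_k)-h_*^i\|^2+\gamma\sigma_i^2$, and the Lyapunov combination whose two coupling conditions are enforced by the two branches of $c$ and the step-size bound, followed by geometric unrolling. The only differences are cosmetic (you balance the coupling terms with generic constants $b,d,\delta$ where the paper tracks explicit coefficients), so no gap to report.
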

    \vspace{-4mm}
    Now we present the assumptions for nonconvex case.
    \vspace{-2mm}
    \begin{assumption}\label{assum:Lsmooth}
        The function $F$ is $L$-smooth.
    \end{assumption}
    \vspace{-4mm}
    \begin{assumption}\label{asymmetry}
    (Bounded data dissimilarity). There exists constant $ \zeta \geq 0$ such that $\forall x \in \R^d$ \\
    $
        \frac{1}{n} \sum_{i=1}^n \| \nabla f_i(x)  - \nabla F(x)\|^2_2 \leq \zeta^2. ~~
    $
    In particular, $\zeta = 0$, implies that all datasets stored in the $n$ devices are drawn from the same data distribution $\mathcal{D}$.
    \end{assumption}
    \vspace{-3mm}
    The following result shows that \algo converges in the nonconvex case.
    \vspace{-2mm}
    \begin{theorem}\label{thm:nonconvex}
      Suppose that Assumption \ref{assum:Lsmooth}, \ref{asymmetry} holds. Let $Q \in \mathcal{U}(\omega)$, Let $S = \{ w_0, w_1, \dots, w_{k-1} \}$ be generated using Algorithm \ref{alg:main}, and $\bar{w}$ be sampled uniformily at random from $S$, for $\alpha \leq \ \sqrt{\frac{n}{2Lw(w+1)\mu_2^2}}$ and $\gamma_k \leq  \frac{1 + \sqrt{1 - \frac{2 L \alpha^2 w(w+1)\mu_2^2}{n}} }{2(w+1)}$, and a parameter $c$ such as $c < \frac{\mu_1}{L \alpha \gamma_k } - \frac{\mu^2_2}{2\gamma_k} $  we have:
    \begin{align}
        \EE{Q}{\|\nabla F(\bar{w})\|_2^2}  &\leq 2 \frac{\bk^0}{ k\alpha \ls 2\mu_1 -  L \alpha \mu_2^2   - 2c L \alpha \gamma _k\rs} +  \frac{4 c L \alpha  }{  2\mu_1 -  L \alpha \mu_2^2   - 2c L \alpha \gamma _k}  \zeta^2 \nonumber\\&+  \frac{\mu_2^2+ 2 c }{  2\mu_1 - \ls L \alpha \mu_2^2  \rs - 2c L \alpha } L  \sigma^2 \nonumber\\
    \end{align}
with $\bk^k = F(w_k) - F^* + c \frac{L \alpha^2}{2}  \frac{1}{n} \sum_{i=1}^{n} \|h_k^i - h_*^i \|^2_2$
\end{theorem}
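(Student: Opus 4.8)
}
The plan is to prove a one-step decrease for the potential $\bk^{k}=F(w_k)-F^{*}+c\tfrac{L\alpha^{2}}{2}\mathcal H_k$, with $\mathcal H_k:=\tfrac1n\sum_{i=1}^{n}\|h_k^i-h_*^i\|^{2}$, and then telescope it, following the usual error-feedback template. Put $h_*^i:=\nabla f_i(w^{*})$, so that $\tfrac1n\sum_i h_*^i=\nabla F(w^{*})=0$ because the global minimiser of the differentiable $F$ is stationary, and write $\wg_k=\tfrac1n\sum_i(c_k^i+h_k^i)$ for the server's aggregated estimate. I will use that the search direction of Algorithm~\ref{alg:approx_trunc} or~\ref{alg:approx_fedsonia} has the form $p_k=-G_k\wg_k$ with $G_k$ symmetric and $\mu_1 I\preceq G_k\preceq\mu_2 I$ (enforced by the eigenvalue truncation), that $Q\in\mathcal U(\omega)$ and the local estimators are unbiased with variance $\le\sigma_i^{2}$, so that $\EE{}{\wg_k}=\nabla F(w_k)$ conditionally, and the dissimilarity bound (Assumption~\ref{asymmetry}).

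\textbf{Step 1 (descent).} $L$-smoothness of $F$ (Assumption~\ref{assum:Lsmooth}) and $w_{k+1}=w_k-\alpha G_k\wg_k$ give $F(w_{k+1})\le F(w_k)-\alpha\la\nabla F(w_k),G_k\wg_k\ra+\tfrac{L\alpha^{2}}{2}\|G_k\wg_k\|^{2}$. Conditioning appropriately and using $\mu_1 I\preceq G_k\preceq\mu_2 I$ and $\EE{}{\wg_k}=\nabla F(w_k)$ yields $\EE{}{\la\nabla F(w_k),G_k\wg_k\ra}\ge\mu_1\|\nabla F(w_k)\|^{2}$ and $\EE{}{\|G_k\wg_k\|^{2}}\le\mu_2^{2}\big(\|\nabla F(w_k)\|^{2}+\EE{}{\|\wg_k-\nabla F(w_k)\|^{2}}\big)$. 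For the variance I split $\wg_k-\nabla F(w_k)$ into the compression part $\tfrac1n\sum_i(c_k^i-(g_k^i-h_k^i))$ and the sampling part $\tfrac1n\sum_i(g_k^i-\nabla f_i(w_k))$: these are conditionally uncorrelated, the sampling part contributes at most $\sigma^{2}$, and since the $Q$'s act independently across workers and $Q\in\mathcal U(\omega)$ the compression part contributes at most $\tfrac{\omega}{n}\cdot\tfrac1n\sum_i\|g_k^i-h_k^i\|^{2}$. Thus the descent inequality carries the residual $\tfrac1n\sum_i\|g_k^i-h_k^i\|^{2}$ with coefficient $\tfrac{L\alpha^{2}\mu_2^{2}\omega}{2n}$, which must be absorbed.

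\textbf{Step 2 (error feedback and combining).} Expanding $\|h_{k+1}^i-h_*^i\|^{2}$ from $h_{k+1}^i=h_k^i+\gamma_k c_k^i$ and using $\EE{Q}{c_k^i}=g_k^i-h_k^i$, the polarisation identity and $\EE{Q}{\|c_k^i\|^{2}}\le(\omega+1)\|g_k^i-h_k^i\|^{2}$ gives $\EE{Q}{\|h_{k+1}^i-h_*^i\|^{2}}\le(1-\gamma_k)\|h_k^i-h_*^i\|^{2}+\gamma_k\|g_k^i-h_*^i\|^{2}-\gamma_k(1-(\omega+1)\gamma_k)\|g_k^i-h_k^i\|^{2}$. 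Instead of discarding the last term I use it (for $\gamma_k<\tfrac1{\omega+1}$) to bound the Step~1 residual by $\tfrac1{\gamma_k(1-(\omega+1)\gamma_k)}\big((1-\gamma_k)\mathcal H_k-\EE{}{\mathcal H_{k+1}}+\gamma_k\cdot\tfrac1n\sum_i\EE{}{\|g_k^i-h_*^i\|^{2}}\big)$, which produces a telescoping $-\EE{}{\mathcal H_{k+1}}$ matching the $\mathcal H$-term of $\bk$. Writing $\nabla f_i(w_k)-\nabla f_i(w^{*})=(\nabla f_i(w_k)-\nabla F(w_k))+\nabla F(w_k)-(\nabla f_i(w^{*})-\nabla F(w^{*}))$ and applying Young's inequality with Assumption~\ref{asymmetry} converts all remaining $\|g_k^i-h_k^i\|^{2}$ and $\|g_k^i-h_*^i\|^{2}$ terms into combinations of $\|\nabla F(w_k)\|^{2}$, $\sigma^{2}$, $\zeta^{2}$ and $\mathcal H_k$. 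Assembling, the coefficient of $\mathcal H_k$ stays $\le c\tfrac{L\alpha^{2}}{2}$, the coefficient of $\|\nabla F(w_k)\|^{2}$ collapses to $-\tfrac{\alpha}{2}(2\mu_1-L\alpha\mu_2^{2}-2cL\alpha\gamma_k)$ — negative exactly under $c<\tfrac{\mu_1}{L\alpha\gamma_k}-\tfrac{\mu_2^{2}}{2\gamma_k}$ — and the leftover additive noise has the form $\tfrac{L\alpha^{2}}{2}(\mu_2^{2}+2c)\sigma^{2}+(\text{const})\,cL\alpha^{2}\gamma_k\zeta^{2}$. The rearrangement is legitimate precisely when the error-feedback savings coefficient dominates the compression residual coefficient, i.e.\ $\gamma_k(1-(\omega+1)\gamma_k)\ge\tfrac{L\alpha^{2}\mu_2^{2}\omega}{2n}$; this is exactly ``$\gamma_k\le$ larger root of $(\omega+1)\gamma^{2}-\gamma+\tfrac{L\alpha^{2}\mu_2^{2}\omega}{2n}\le0$'', a root which is real precisely when $\alpha\le\sqrt{n/(2L\omega(\omega+1)\mu_2^{2})}$ — the two stepsize restrictions in the statement.

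\textbf{Step 3 and the main obstacle.} The assembled recursion is $\EE{}{\bk^{k+1}}\le\bk^{k}-\tfrac{\alpha}{2}(2\mu_1-L\alpha\mu_2^{2}-2cL\alpha\gamma_k)\EE{}{\|\nabla F(w_k)\|^{2}}+(\text{noise in }\sigma^{2},\zeta^{2})$; summing over $k=0,\dots,K-1$, using $\bk^{K}\ge0$, dividing by $K$ and by the positive gradient coefficient, and using $\tfrac1K\sum_{k<K}\EE{}{\|\nabla F(w_k)\|^{2}}=\EE{Q}{\|\nabla F(\bar w)\|_2^{2}}$ (as $\bar w$ is uniform on $\{w_0,\dots,w_{K-1}\}$), delivers the claimed bound. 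The main obstacle is Step~2: since in the nonconvex case only the average $F$ is assumed smooth, every occurrence of $\|g_k^i-h_k^i\|^{2}$ or $\|\nabla f_i(w_k)-\nabla f_i(w^{*})\|^{2}$ must be funnelled through the dissimilarity bound and $\|\nabla F(w_k)\|^{2}$, and one has to simultaneously keep the error-feedback savings large enough to swallow the compression residual of the descent (which forces the quadratic-in-$\gamma_k$ constraint, hence the square root in the $\gamma_k$ bound and the cap on $\alpha$) and keep the $\|\nabla F(w_k)\|^{2}$ coefficient negative (which forces the upper bound on $c$); making every constant land on the stated expressions is the delicate bookkeeping.
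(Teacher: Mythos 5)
Your sketch follows essentially the same route as the paper's proof: an $L$-smoothness descent step combined with $\mu_1 I \preceq A_k \preceq \mu_2 I$ and the variance decomposition of the compressed aggregate $\wg_k$, the error-feedback second-moment bound with its signed $\|g_k^i-h_k^i\|^2$ term retained, the same Lyapunov function $\bk^k$, the same quadratic-in-$\gamma_k$ constraint (the paper's condition $T(\gamma_k,\alpha)\le 0$) whose discriminant yields the cap on $\alpha$, the condition on $c$ to keep the $\|\nabla F(w_k)\|^2$ coefficient negative, and telescoping with $\bar w$ uniform on the iterates. The only difference is bookkeeping: the paper multiplies the feedback inequality by $c\tfrac{L\alpha^2}{2}$, adds it to the descent inequality, and requires the grouped residual coefficient to be nonpositive, whereas you solve the feedback inequality for the residual and divide by the savings coefficient $\gamma_k\ls 1-(\omega+1)\gamma_k\rs$ — both reduce to the same constraint and the same final recursion.
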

    
\begin{corollary}\label{cor:nonconvex}
Set $\gamma_k = \gamma$  , $\alpha = \frac{2 \mu_1 - 1}{L(\mu_2^2 + 2c \gamma) \sqrt{K}}$ and $h_0 = 0$, after $K$ iterations of algorithm \ref{alg:main}, in the nonconvex setting, the error $\epsilon$ is at worst $\frac{2}{\sqrt{K}} \frac{ L(\mu_2^2 + 2c \gamma)}{  \ls 2 \mu_1 - 1 \rs } \bk^0 + \frac{1}{\sqrt{K}} \frac{4 c \ls 2 \mu_1 - 1 \rs   }{ \mu_2^2 + 2c \gamma}  \zeta^2 + \frac{1}{\sqrt{K}} \frac{\ls\mu_2^2+ 2 c \gamma\rs \ls 2\mu_1 - 1\rs }{ \mu_2^2 + 2c \gamma }  \sigma^2$.
\end{corollary}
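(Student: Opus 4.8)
The corollary is a direct specialization of Theorem~\ref{thm:nonconvex}, so the plan is simply to substitute the prescribed parameters into its bound and simplify. Concretely, I would take $k = K$, $\gamma_k = \gamma$, $h_0^i = 0$ for all $i$, and insert $\alpha = \tfrac{2\mu_1 - 1}{L(\mu_2^2 + 2c\gamma)\sqrt{K}}$ everywhere it appears in the estimate of Theorem~\ref{thm:nonconvex}.

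The observation that makes every term collapse is that this $\alpha$ is precisely the value for which
\[
  L\alpha(\mu_2^2 + 2c\gamma) \;=\; \frac{2\mu_1 - 1}{\sqrt{K}}.
\]
Using this, the denominator shared by the three terms of Theorem~\ref{thm:nonconvex} satisfies
\[
  2\mu_1 - L\alpha\mu_2^2 - 2cL\alpha\gamma \;=\; 2\mu_1 - \frac{2\mu_1 - 1}{\sqrt{K}} \;\geq\; 2\mu_1 - (2\mu_1 - 1) \;=\; 1 \qquad (K \geq 1),
\]
where I used $2\mu_1 - 1 > 0$, which is forced by positivity of the step-size. Replacing each denominator by its lower bound $1$ only weakens the inequality.

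With the denominators set to $1$, each of the three contributions becomes a monomial in $\alpha$ that I evaluate directly. The first term becomes $\tfrac{2\bk^0}{K\alpha} = \tfrac{2L(\mu_2^2 + 2c\gamma)}{(2\mu_1 - 1)\sqrt{K}}\,\bk^0$; the $\zeta^2$-term becomes $4cL\alpha\,\zeta^2 = \tfrac{4c(2\mu_1 - 1)}{(\mu_2^2 + 2c\gamma)\sqrt{K}}\,\zeta^2$; and the $\sigma^2$-term, after rewriting its numerator with the identity $L\alpha(\mu_2^2 + 2c\gamma) = \tfrac{2\mu_1 - 1}{\sqrt{K}}$, becomes $\tfrac{2\mu_1 - 1}{\sqrt{K}}\,\sigma^2$, which is exactly the stated $\tfrac{1}{\sqrt{K}}\,\tfrac{(\mu_2^2 + 2c\gamma)(2\mu_1 - 1)}{\mu_2^2 + 2c\gamma}\,\sigma^2$. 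Summing the three bounds yields the claimed estimate for $\epsilon$.

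The only content beyond this algebra is admissibility: one has to check that the prescribed $\alpha$ obeys the hypotheses of Theorem~\ref{thm:nonconvex}, i.e. $\alpha \leq \sqrt{n/(2L\omega(\omega+1)\mu_2^2)}$, the stated upper bound on $\gamma_k$, and $c < \tfrac{\mu_1}{L\alpha\gamma} - \tfrac{\mu_2^2}{2\gamma}$. Since $\alpha = \Theta(1/\sqrt{K})$, all three hold once $K$ is large enough — the first because $\alpha \to 0$, the second because its right-hand side tends to $\tfrac{1}{\omega+1}$ as $\alpha \to 0$, and the third because $\tfrac{\mu_1}{L\alpha\gamma} = \Theta(\sqrt{K}) \to \infty$ — so in practice this is just a ``$K$ sufficiently large'' proviso. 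I expect this bookkeeping, together with keeping the constants $\mu_1,\mu_2,c,\gamma,\omega$ straight through the substitution, to be the fiddliest part; the substitution itself is routine.
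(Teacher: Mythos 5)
Your proposal is correct and follows essentially the same route as the paper's own (very terse) proof: observe that with the prescribed $\alpha$ one has $2\mu_1 - L\alpha\mu_2^2 - 2cL\alpha\gamma = 2\mu_1 - \tfrac{2\mu_1-1}{\sqrt{K}} \geq 1$, replace the common denominator by $1$, and substitute $L\alpha(\mu_2^2+2c\gamma) = \tfrac{2\mu_1-1}{\sqrt{K}}$ into each term. Your additional bookkeeping (the requirement $2\mu_1 - 1 > 0$ and the ``$K$ sufficiently large'' admissibility check of the step-size and $\gamma$ conditions) is detail the paper leaves implicit, not a deviation in method.
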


\section{Experiments}\label{sec:experiments}

We analyse the practical benefit of the proposed approach on regularized logistic regression problem for binary classification
\vspace{-3mm}
\begin{equation*}
    \min \limits_{w \in \R^d} \lb \tfrac{1}{n} \textstyle{\sum}_{i=1}^n \tfrac{1}{r} \textstyle{\sum}_{j=1}^r \log (1+ \exp(-b_{ij}a_{ij}^Tw)) + \frac{\mu}{2} \|w\|^2\rb,
\end{equation*}
\vspace{-0.5mm}
\noindent $\lb a_{ij}, b_{ij}\rb_{j \in [m]}$ are data points on $i$-th device. We use three datasets from the LIBSVM library \cite{chang2011libsvm}: gisette-scale ($5000$ features) and real-sim ($20958$ features), and a9a ($123$ features) (Appendix \ref{app:experiments}).

\textbf{\algo vs FLECS.} In this experiment we illustrate that gradient compression improves the convergence of FLECS in terms of communicated gradients per node. We use random dithering compressor for sketched Hessian and gradients with $128$ levels and $\infty$ norm. Hyperparameters of both methods are set the same: initial Hessian approximation $B_k^i = 0, ~ \omega = 10^{-5}, ~ \Omega=10^8,~ \alpha = 1,~ \beta = 1$,~ for FedSONIA update we set $\rho = \frac{1}{\Omega}$. For \algo $\gamma = 1$. We choose memory sizes $m = 1,~2,~4,~8$. Random dithering is used as compressor with $s=64$ levels and $p=\infty$-norm.  Both \algo and FLECS show their best performance with $m=1$.  Because of additional gradient compression, \algo outperforms FLECS in this low memory-size setup. 

According to the paper \cite{agafonov2022flecs} FLECS outperforms FedNL, DIANA and ADIANA. In the experiments (Figure \ref{fig:main}), we showed that gradient-compression improves FLECS convergence since it reduces communication complexity. Additional
experiments
can be found in Appendix \ref{app:experiments}.

\begin{figure}\label{fig:main}
\includegraphics[width=0.24\textwidth]{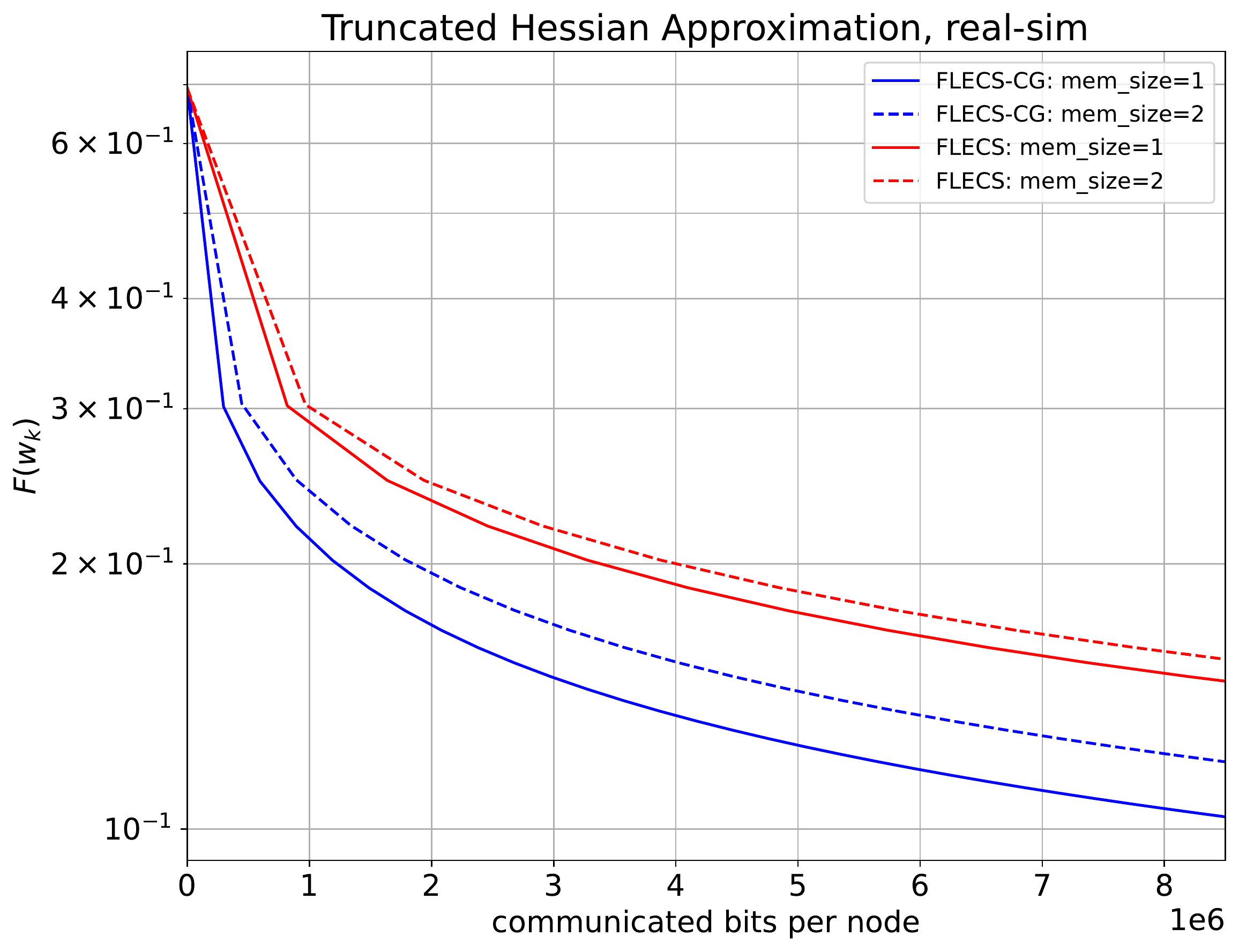}
\includegraphics[width=0.24\textwidth]{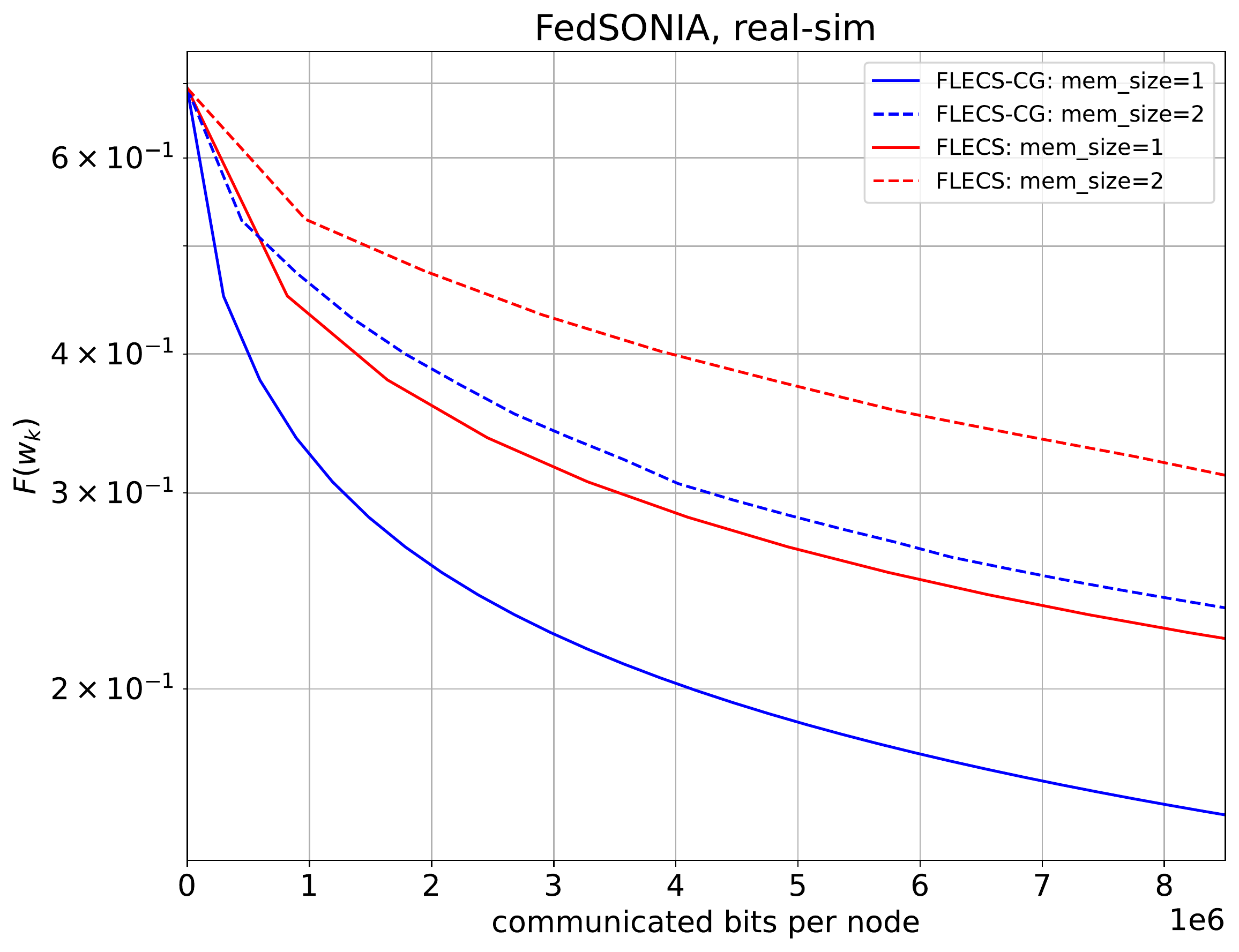}
\includegraphics[width=0.24\textwidth]{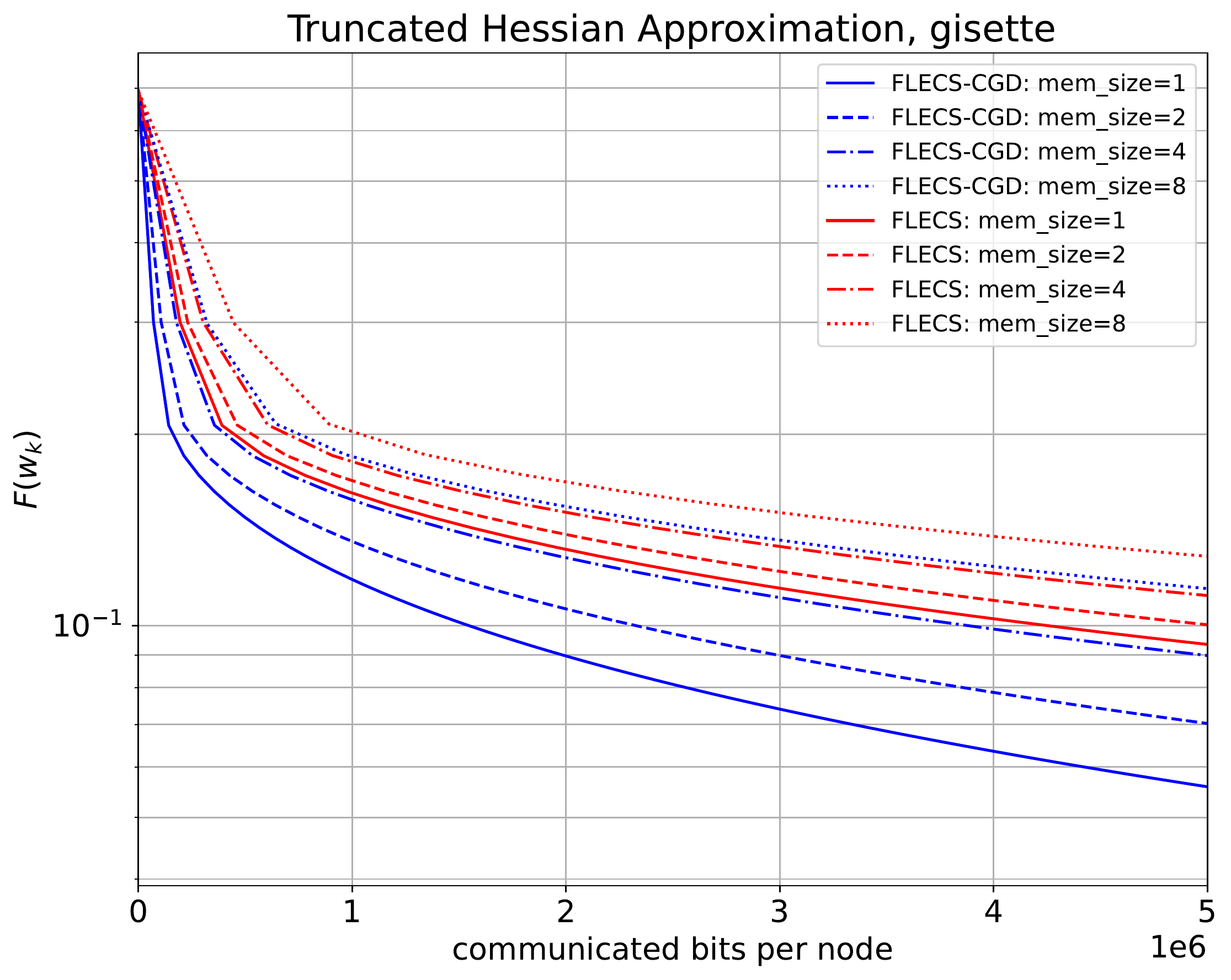}
\includegraphics[width=0.24\textwidth]{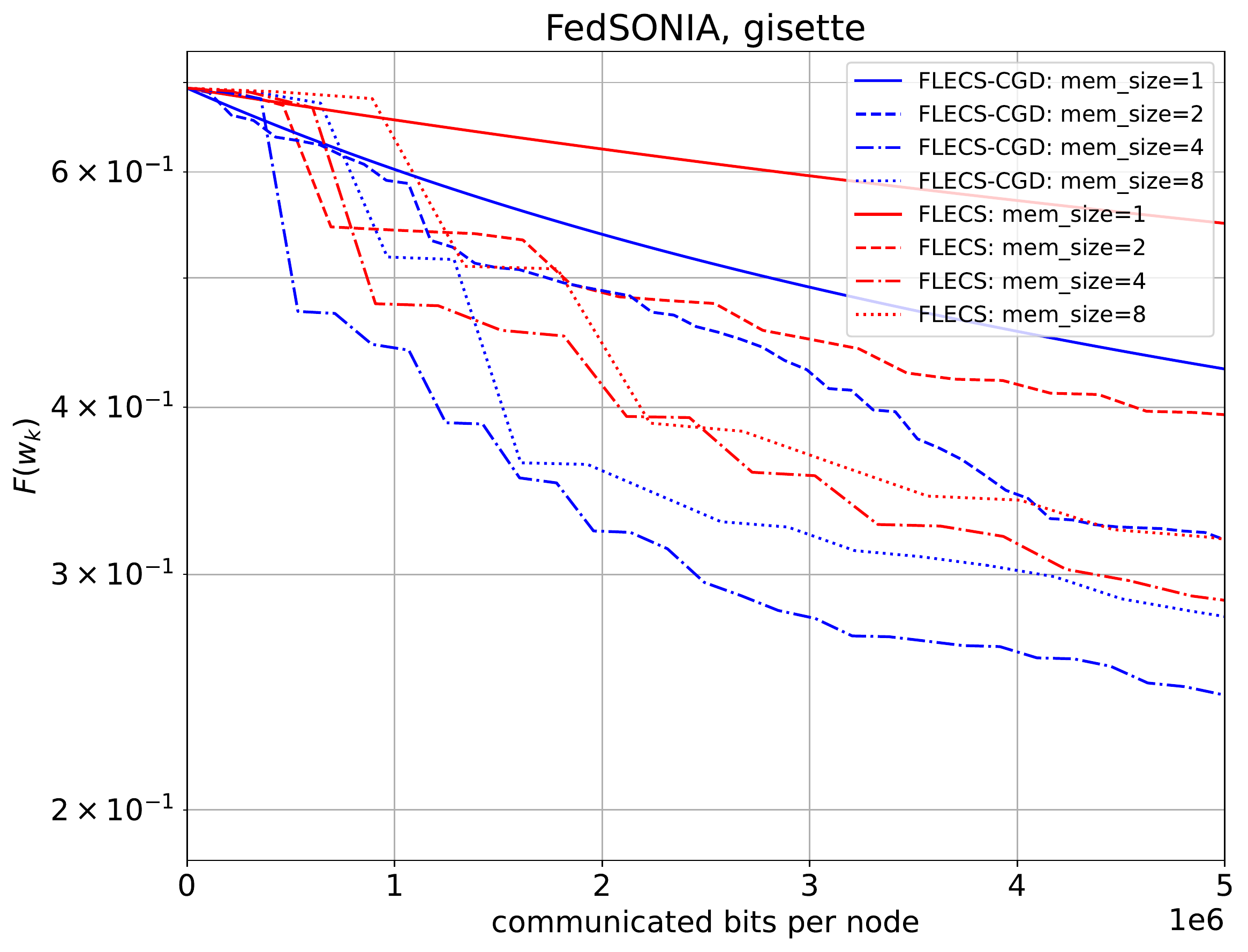}
\\
\includegraphics[width=0.24\textwidth]{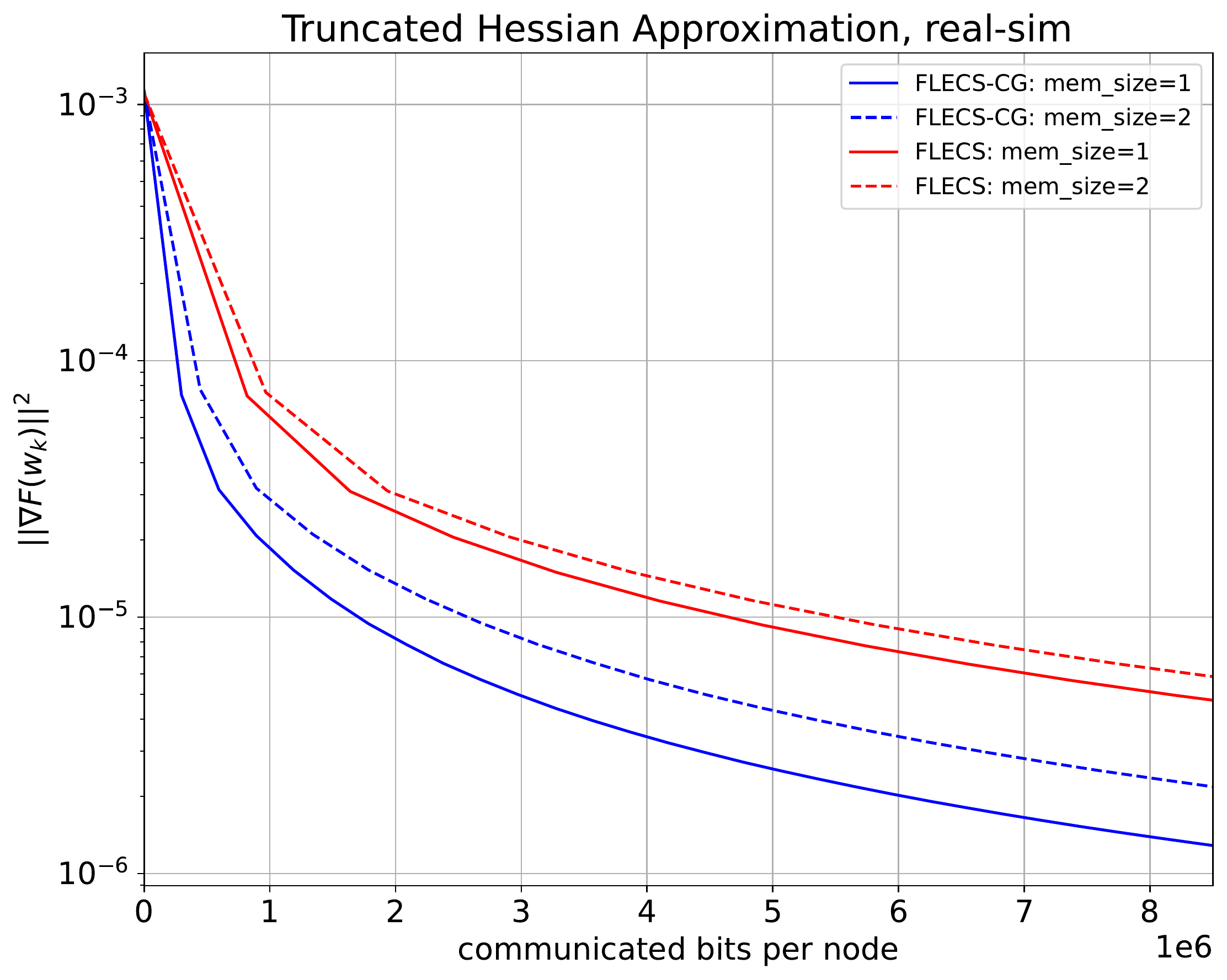}
\includegraphics[width=0.24\textwidth]{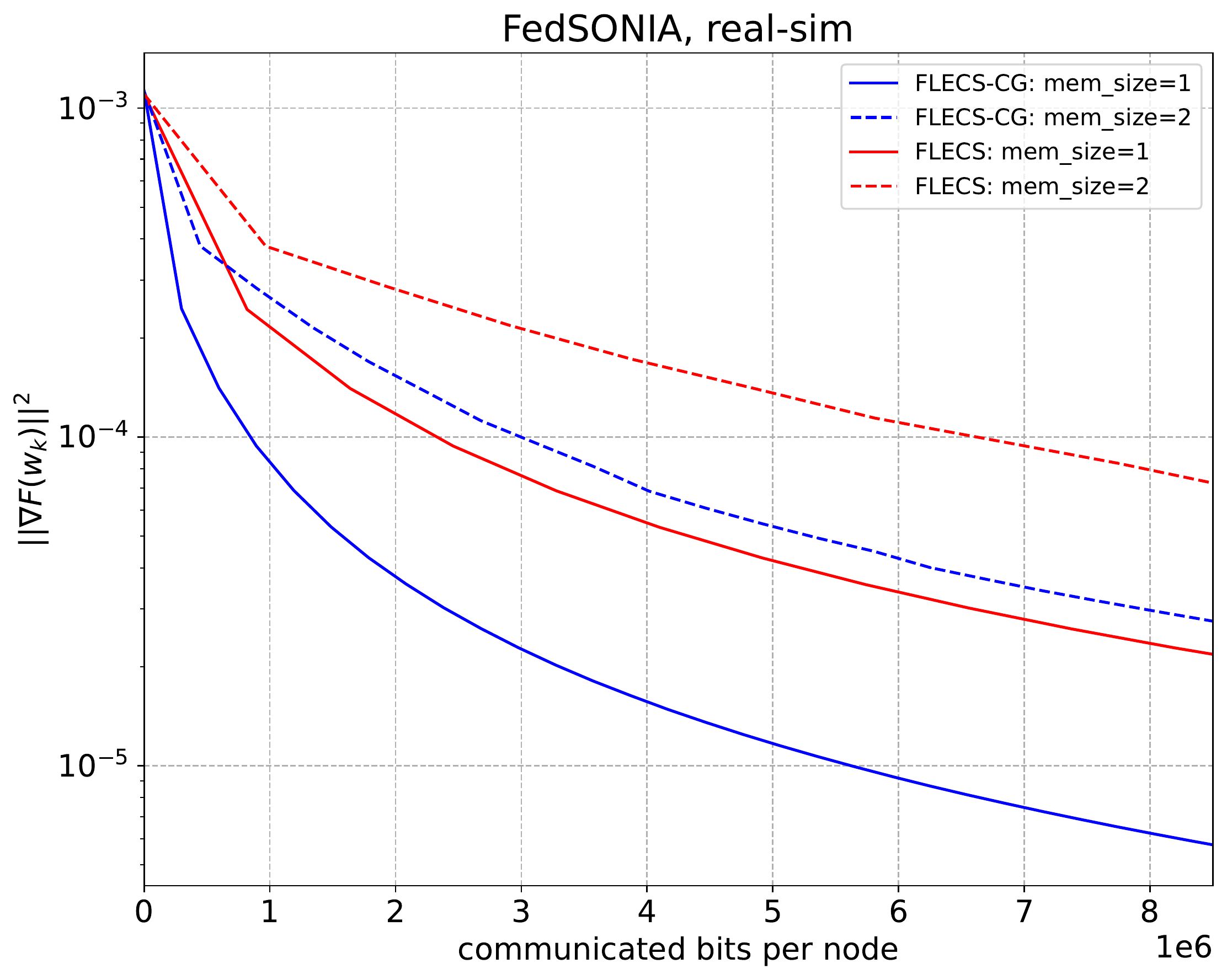}
\includegraphics[width=0.24\textwidth]{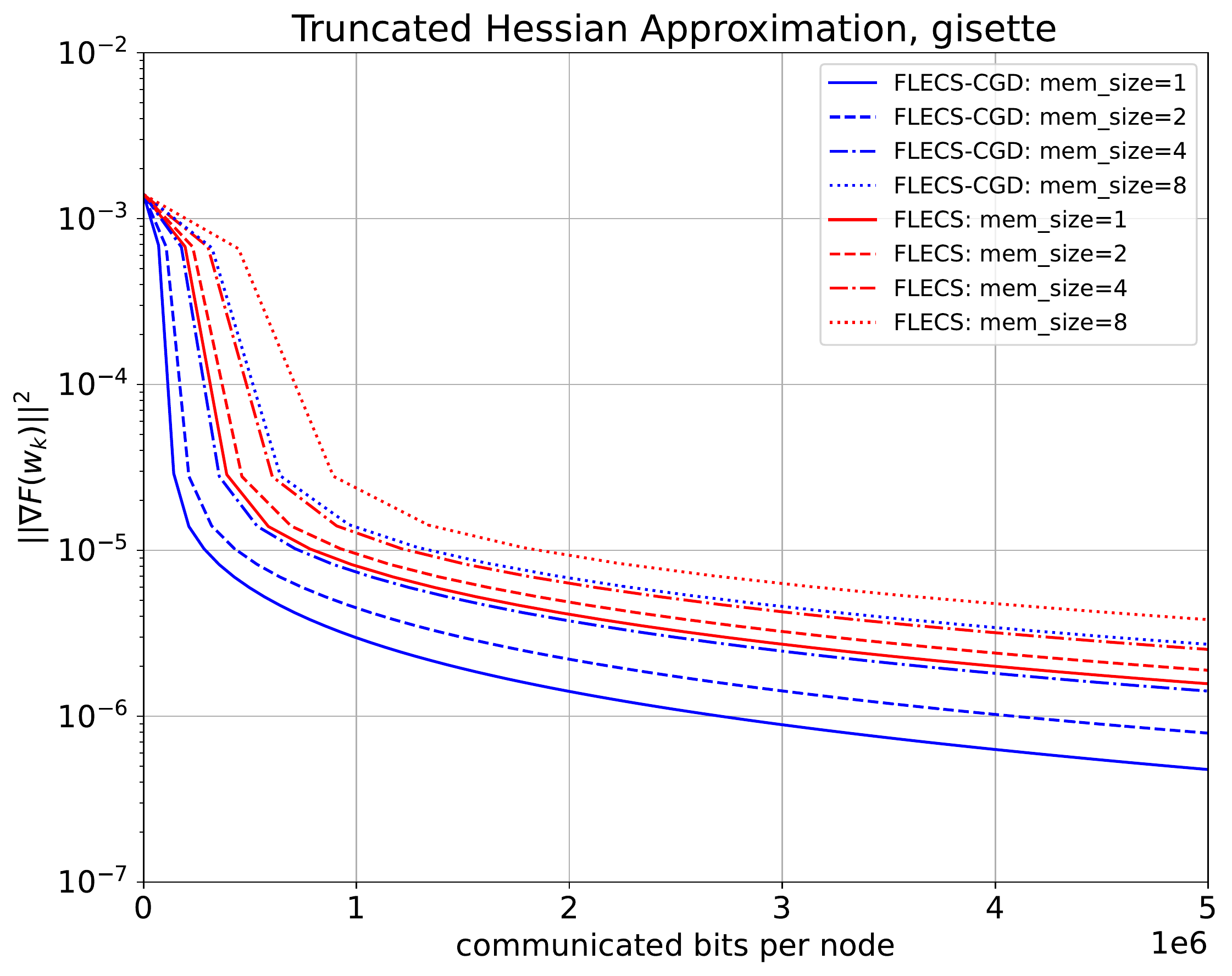}
\includegraphics[width=0.24\textwidth]{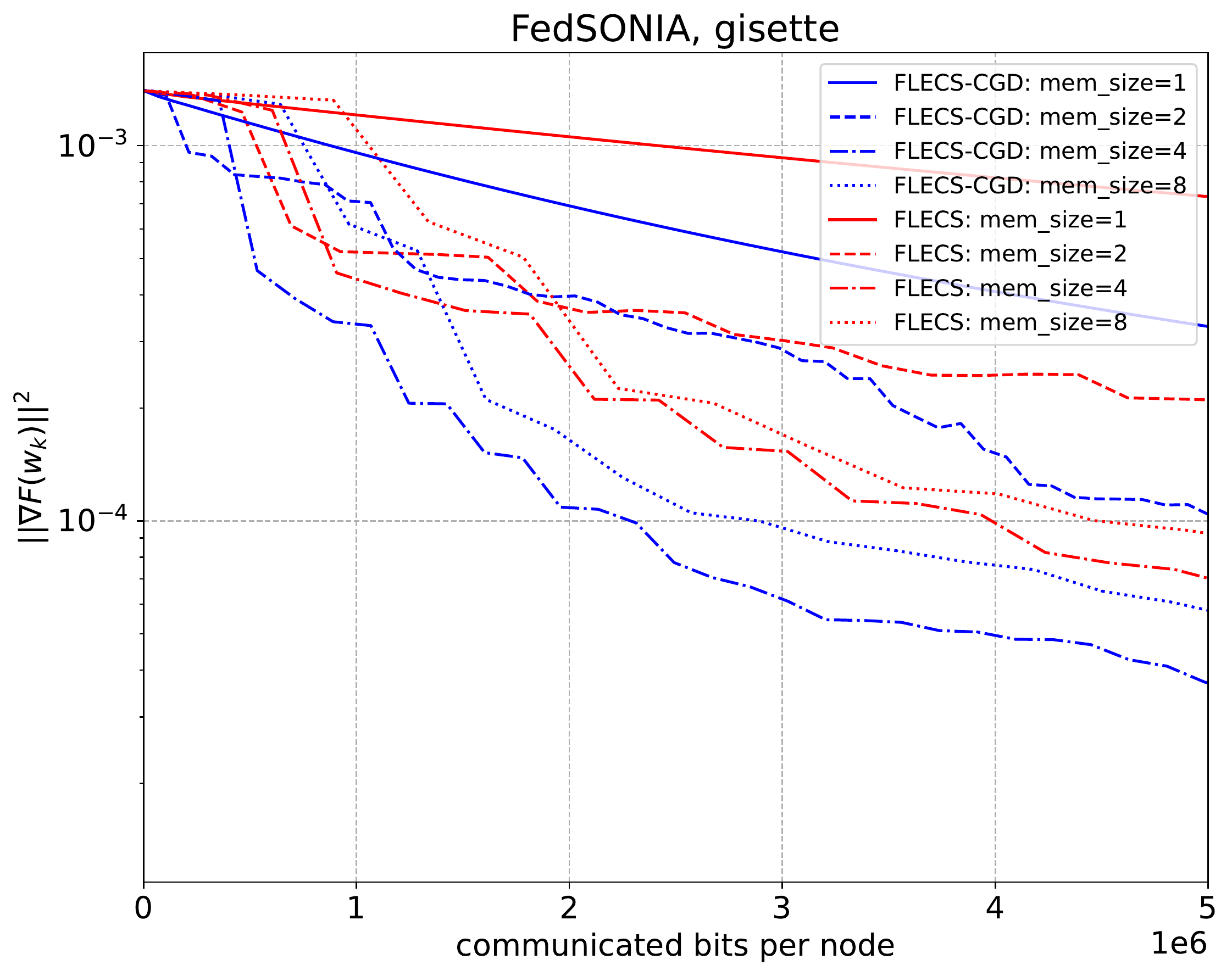}
\caption{Comparison of objective function $F(w_k)$ and the squared norm of gradient $\|\nabla F(w_k)\|^2$ for FLECS and \algo.}
\vskip-10pt
\end{figure}

\newpage

\bibliography{bibliography}

\newpage

\appendix

\section{FLECS-CGD}

\subsection{Hessian Approximation Update}
    \begin{algorithm}[t]
        \caption{Truncated L-SR1 update\cite{agafonov2022flecs}}\label{alg:hess_lsr1}
        \begin{algorithmic}[1]
            \REQUIRE $\Y_k^i \in \R^{d \times m}, ~M_k^i \in \R^{m \times m}, ~B_k^i \in \R^{d \times d}, ~S_k \in \R^{d \times m}$ for $i = 1, \ldots, n$,
            $~\omega > 0$ -- truncation constant.
            \STATE \textbf{On the server:}
            \FOR{$i = 1, \ldots, n$}
            \STATE compute $(M_k^i - (S_k^i)^T\Y_k^i) =U_k^i L_k^i (U_k^i)^T$;
            \STATE truncate $(L_k^i)^{-1}$  
            to form $[(L_k^i)^{-1}]_\omega$;
            \STATE compute $B_{k+1}^i$ via \begin{equation}\label{eq:LSR1_truncated}
                B_{k+1} = B_k + (\bY_k - B_k^iS_k)U_k^i [(L_k^i)^{-1}]_\omega  (U_k^i)^T(\bY_k - B_k^iS_k)^T.
            \end{equation}
            \ENDFOR
        \end{algorithmic}
    \end{algorithm}
    
    \begin{algorithm}[h]
      \caption{Direct update \cite{agafonov2022flecs}}\label{alg:hess_direct}
        \begin{algorithmic}[1]
            \REQUIRE $\Y_k^i \in \R^{d \times m}, ~M_k^i \in \R^{m \times m}, ~B_k^i \in \R^{d \times d}$ $\forall i$
            \STATE \textbf{On the server:}
            $0<\beta_k\leq 1$ -- learning rate.
            \FOR{$i = 1, \ldots, n$}
            \STATE compute $\B^i_k = \Y^i_k (M^i_k)^\dagger (\Y^i_k)^T$;
            \STATE select learning rate $\beta_k$ 
            \STATE compute
            $B^i_{k+1} = (1 - \beta_k)B^i_{k} + \beta_k\B^i_k$.
            \ENDFOR
        \end{algorithmic}
    \end{algorithm}

    \subsection{Iterate update}
    
    \begin{definition}\label{def:truncation}
        Let $B_k, V_k, \Lambda_k$ be matrices such that $B_k = V_k \Lambda_k V_k^T$, and let $0 < \omega \leq \Omega$. The truncated inverse Hessian approximation of $B_{k}$ is $\left(|B_{k}|_\omega^\Omega\right)^{-1} := V_k (|\Lambda_k|_\omega^\Omega)^{-1}V_k^T$,
        where \\ $(|\Lambda_k|_\omega^\Omega)_{ii} = \min\lb \max \lb |\Lambda|_{ii}, \omega\rb, \Omega\rb.$
    \end{definition}
    Definition \ref{def:truncation} was proposed in \cite{paternain2019newton} and was used to provide a convergence guarantee for their Nonconvex Newton method (to a local minimum). Firstly, an eigen-decomposition of $B_k$ is computed, but with every eigenvalue replaced by its absolute value. Secondly, a thresholding step is applied, so that any eigenvalue (in absolute value) that is smaller (resp. greater) than a user defined threshold $\omega$ (resp. $\Omega$) is replaced by $\omega$ (resp. $\Omega$).
    
    \begin{algorithm}[t]
           \caption{Truncated inverse Hessian approximation \cite{agafonov2022flecs}}\label{alg:approx_trunc}
            \begin{algorithmic}[1]
            \REQUIRE $\nabla F(w_k) \in \R^d, ~ \bY_k \in \R^{d\times m}, ~ M_k \in \R^{m\times m}, ~ B_{k+1} \in \R^{d\times d}$,
            $\Omega > \omega > 0$ -- truncation constants.
            \STATE \textbf{On the server:}
            \STATE compute spectral decomposition $B_{k+1} = V_k \Lambda_k V_k^T$;
            \STATE truncate $\Lambda_k$ 
            to form $|\Lambda_k|_\omega^\Omega$  via Definition \ref{def:truncation};
            \STATE compute search direction $p_k$ via 
            $p_k = \left(|B_{k+1}|_\omega^\Omega\right)^{-1} \nabla F(w_k);$
            \RETURN $p_k$.
        \end{algorithmic}
    \end{algorithm}
    
    \begin{algorithm}[t]
           \caption{FedSONIA \cite{agafonov2022flecs}}\label{alg:approx_fedsonia}
            \begin{algorithmic}[1]
            \REQUIRE $\nabla F(w_k) \in \R^d, ~ \bY_k \in \R^{d\times m}, ~ M_k \in \R^{m\times m}$,
            $\Omega > \omega > 0$ -- truncation constants.
            \STATE \textbf{On the server:}
            \STATE compute $B_k^{\text{SONIA}} \eqdef \Y_k (M_k)^\dagger \Y_k^T$;
            \STATE compute $QR$ factorization of $\Y_k (= Q_k R_k)$;
            \STATE compute spectral decomposition of $R_k(M_k)^\dagger R_k^T (= V_k \Lambda_k V_k^T)$;
            \STATE construct $\widetilde{V}_k := Q_k V_k$;
            \STATE truncate $\Lambda_k$ 
            to form $|\Lambda_k|_\omega^\Omega$ via Definition \ref{def:truncation};
            \STATE Set $\rho_k$ and decompose gradient via 
            $\nabla F(w_k) = g_k + g_k^\perp;$
            \STATE Compute search direction $p_k$ via 
            $p_k :=  - \left(|B_{k+1}^{\text{SONIA}}|_\omega^\Omega\right)^{-1} g_k - \rho_k g^\perp_k,
;$
            \RETURN $p_k$.
        \end{algorithmic}
    \end{algorithm}

\section{Additional Experiments}\label{app:experiments}
Comparison between FLECS and \algo is provided on Figure \ref{fig:add_exp}.
\\
Comparison between \algo's iterate updates (Algorithms \ref{alg:approx_trunc}, \ref{alg:approx_fedsonia}) is provided on Figure \ref{fig:add_exp_iters}.

\begin{figure}
\includegraphics[width=0.46\textwidth]{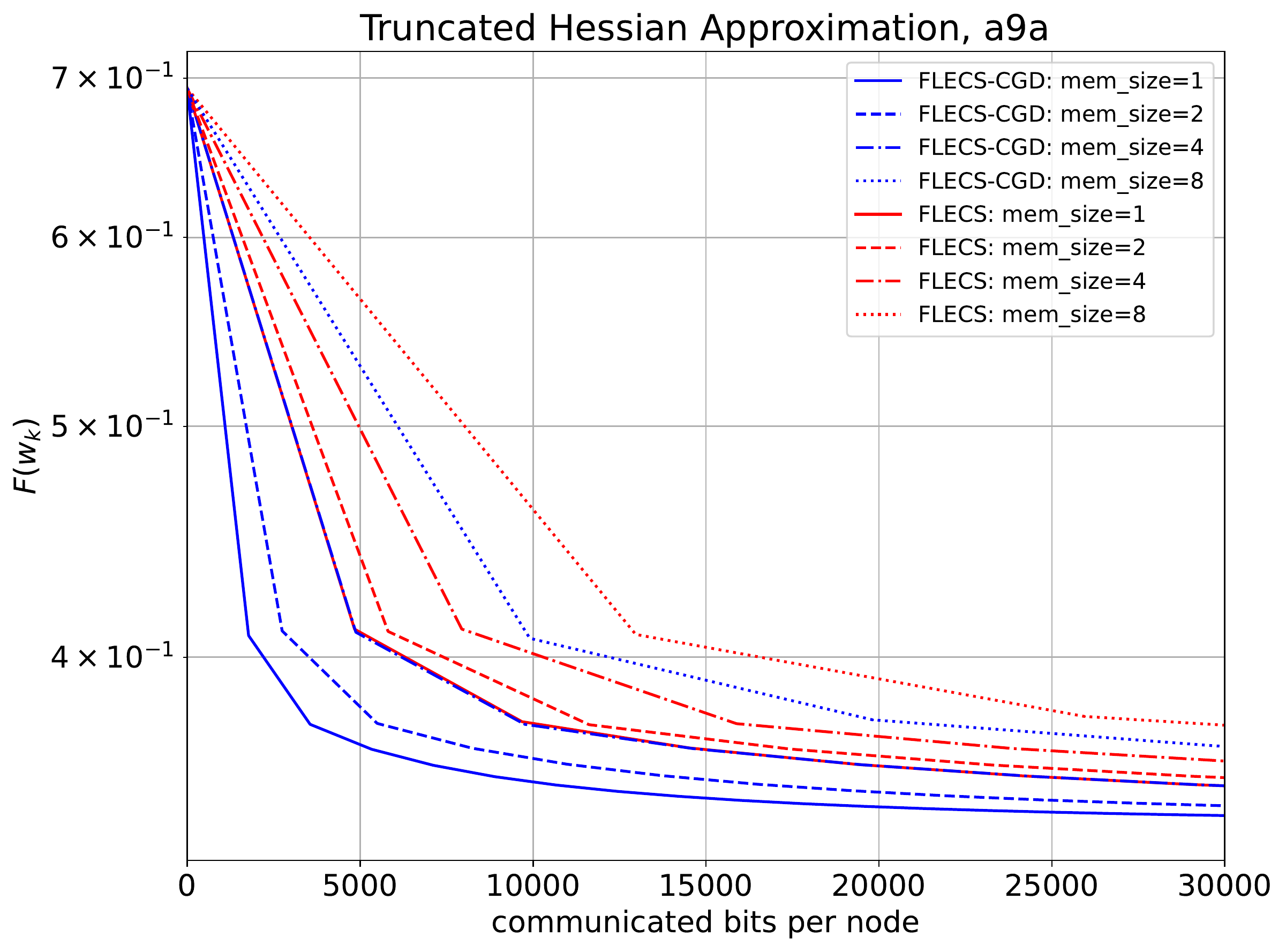}
\includegraphics[width=0.46\textwidth]{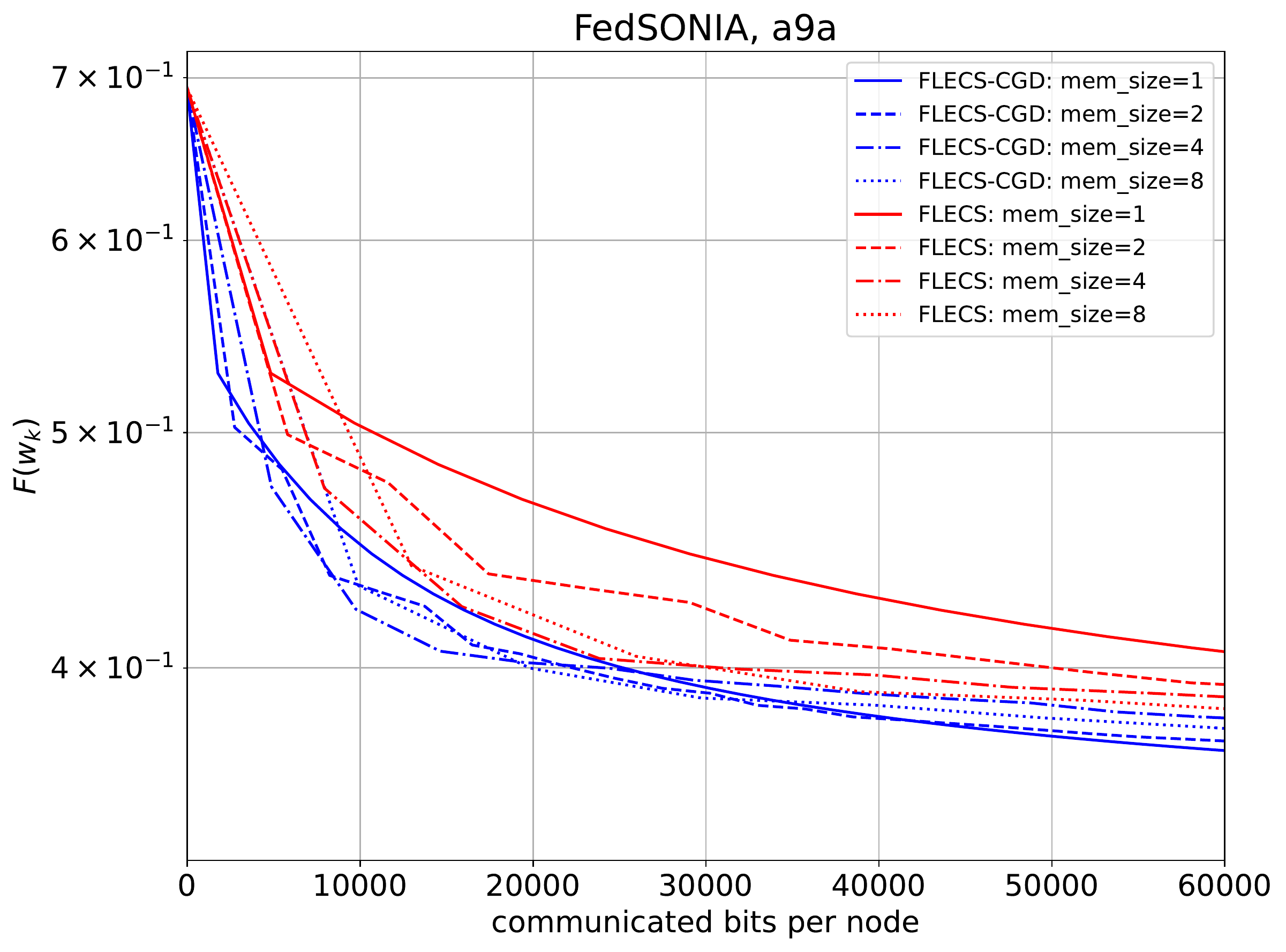}
\\
\includegraphics[width=0.46\textwidth]{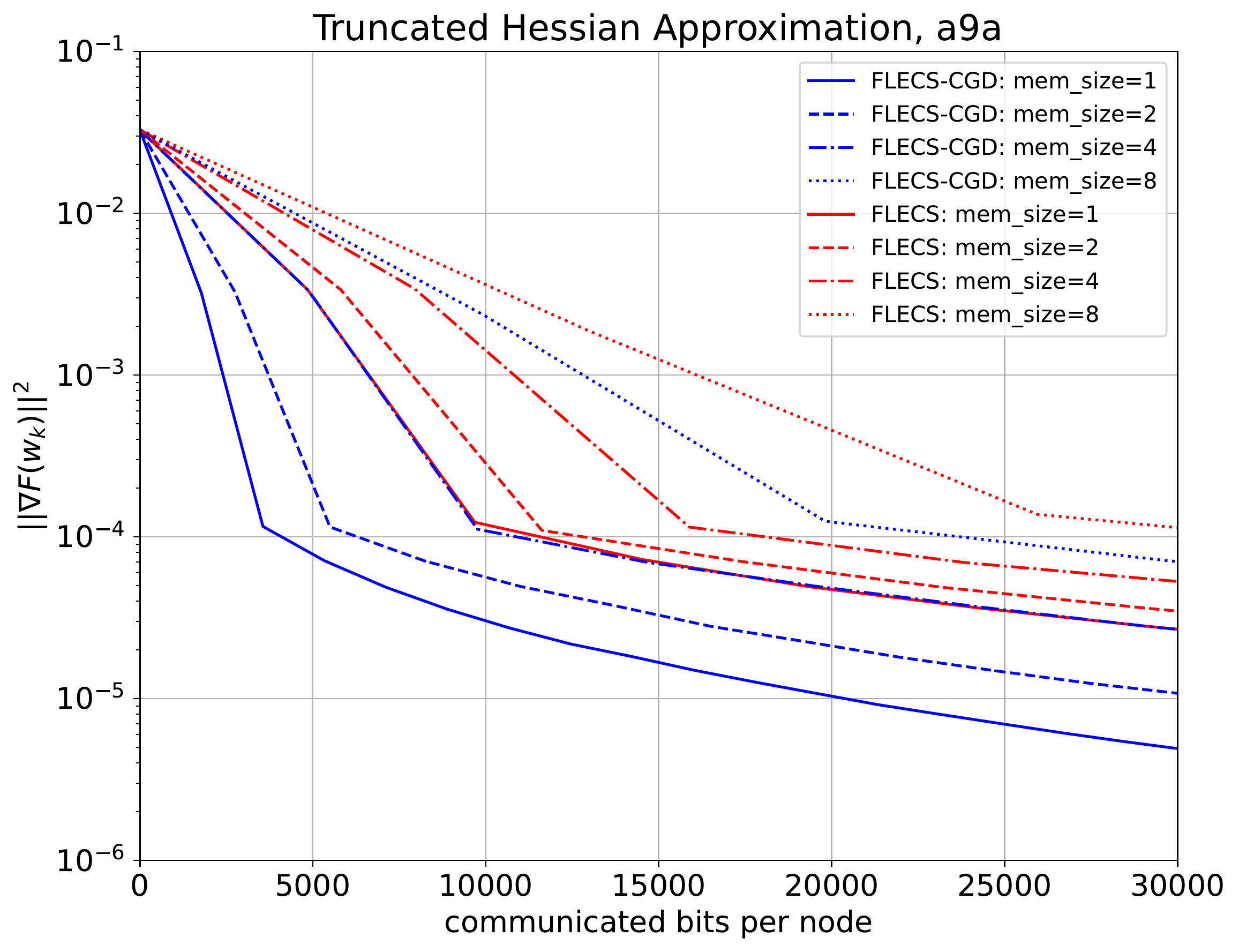}
\includegraphics[width=0.46\textwidth]{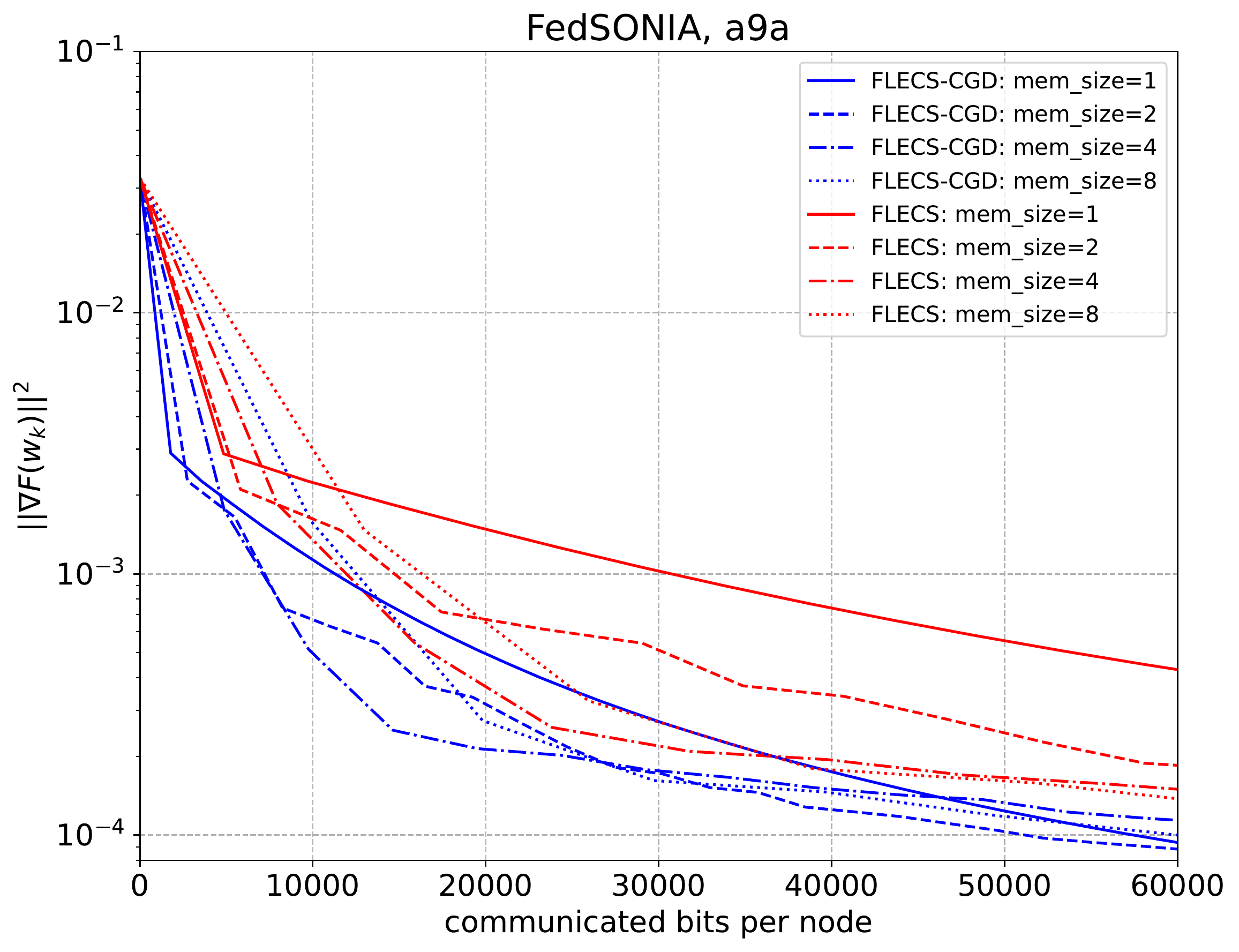}
\caption{Comparison of objective function $F(w_k)$ and the squared norm of gradient $\|\nabla F(w_k)\|^2$ for FLECS and \algo.} \label{fig:add_exp}
\vskip-10pt
\end{figure}

\begin{figure}
\includegraphics[width=0.33\textwidth]{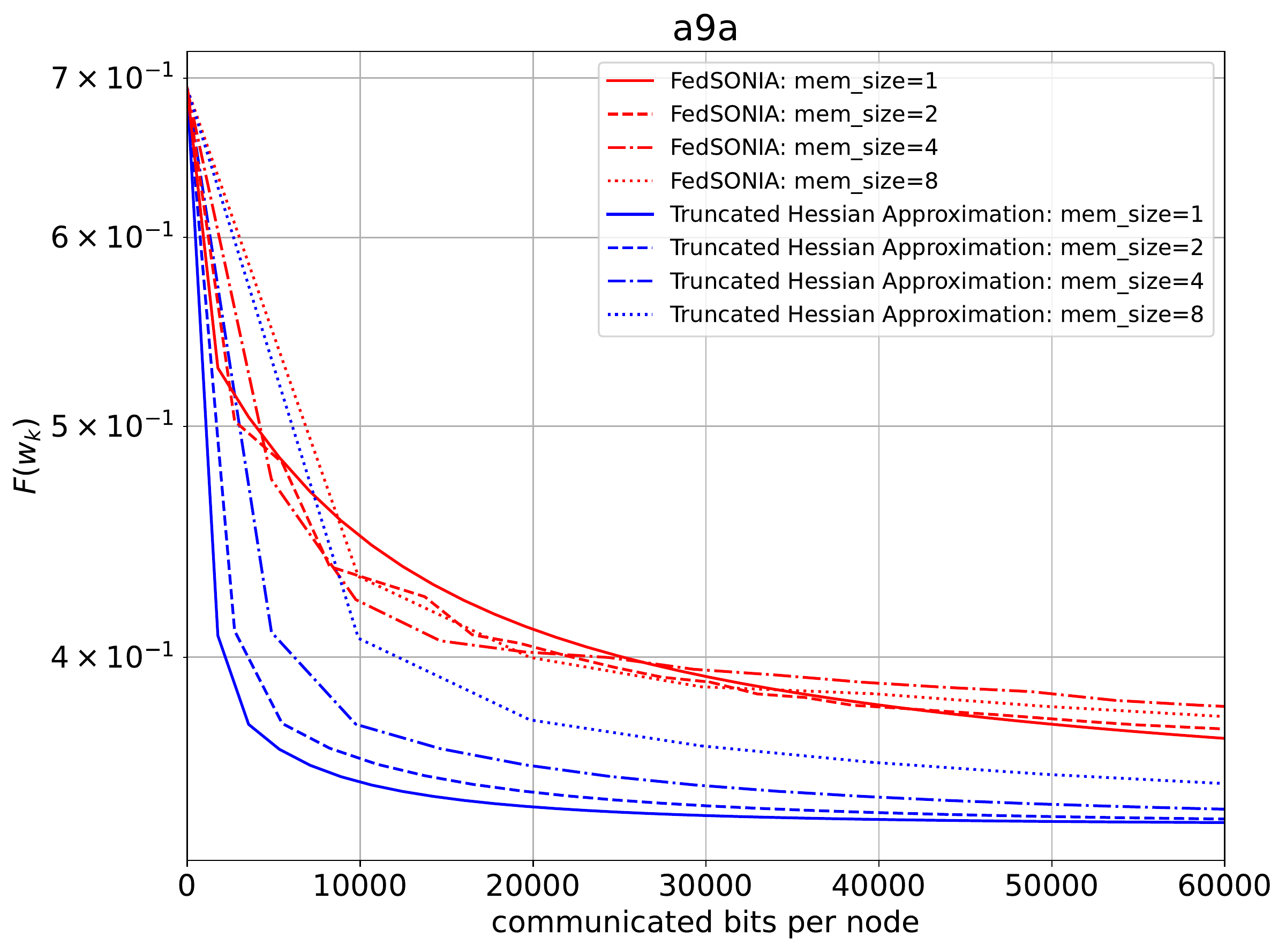}
\includegraphics[width=0.33\textwidth]{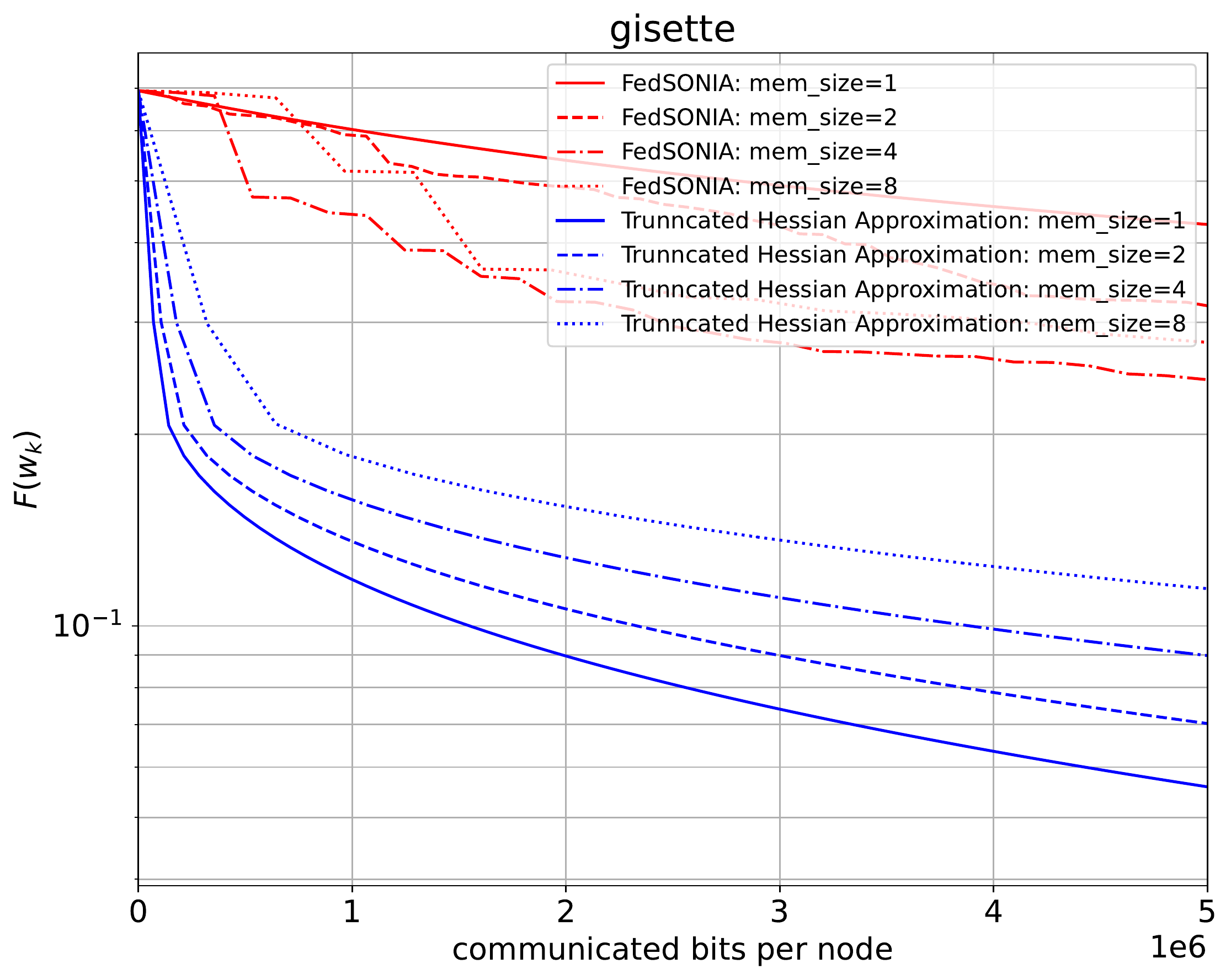}
\includegraphics[width=0.33\textwidth]{experiments/flecscg_vs_flecs/real-sim/FLECS_THA_FEDSONIA.pdf}
\\
\includegraphics[width=0.33\textwidth]{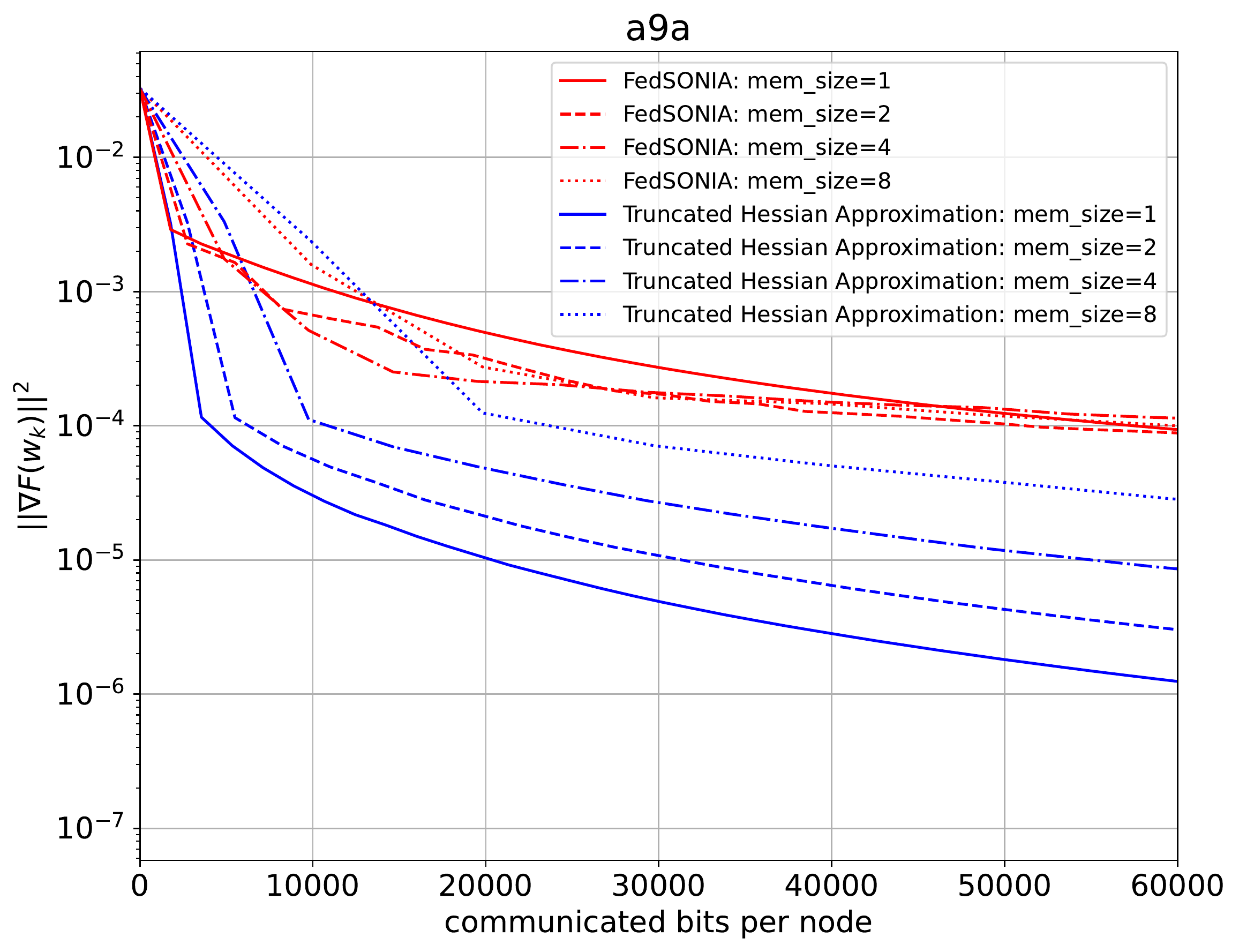}
\includegraphics[width=0.33\textwidth]{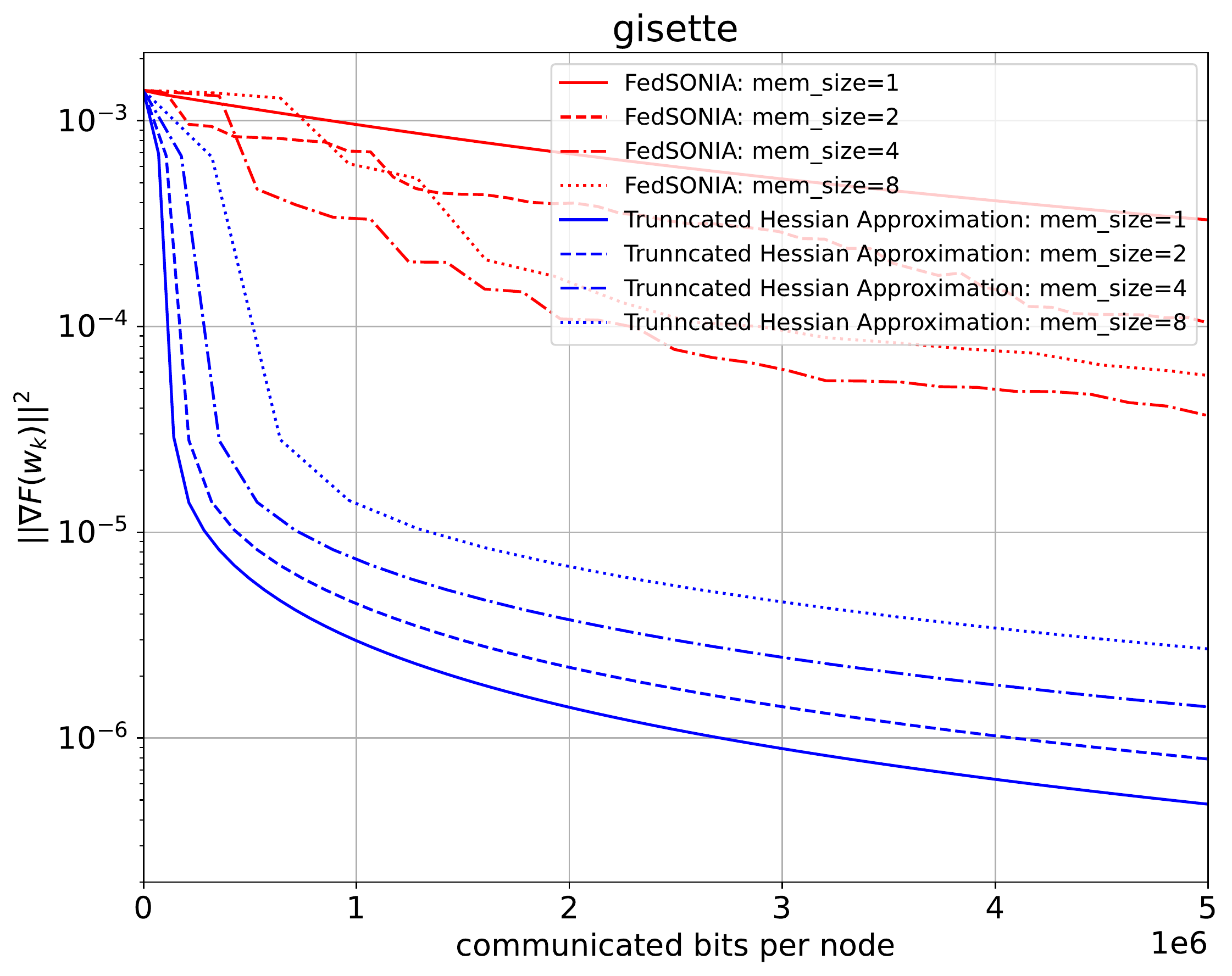}
\includegraphics[width=0.33\textwidth]{experiments/flecscg_vs_flecs/real-sim/FLECS_THA_FEDSONIA_grad.pdf}
\caption{Comparison of objective function $F(w_k)$ and the squared norm of gradient $\|\nabla F(w_k)\|^2$ for different iterate updates in \algo.} \label{fig:add_exp_iters}
\vskip-10pt
\end{figure}

\section{Proofs}
\subsection{Basic identities and propertites}

Let $x,y \in \R^d$ and $\alpha \in [0,1]$:
\begin{align}
    \| \alpha x + (1 - \alpha)y\|^2_2 = \alpha\|x \|_2^2 + (1 - \alpha)\|y \|_2^2 - \alpha(1 - \alpha)\|x-y \|_2^2 \label{eq:norm_convex_comb}
\end{align}

Let $g$ be a random vector, and $h \in \R^d$:
\begin{align}
    \EE{}{\| g - \EE{}{g} \|^2_2}  = \EE{}{\| g - h\|^2_2} - \| \EE{}{g} - h \|^2_2 \label{eq:variance_decomp}
\end{align}
\textbf{}

For any independent random variables $X_1, X_2, \ldots, X_n \in \R^d$ 
\begin{equation}\label{eq:var_avg}
    \EE{}{\|\frac{1}{n} \sum \limits_{i=1}^n (X_i - \EE{}{X_i}) \|^2} = \frac{1}{n^2} \sum \limits_{i=1}^n \EE{}{\|X_i - \EE{}{X_i}\|^2}
\end{equation}

    \begin{lemma}\cite{agafonov2022flecs}\label{search_direction}
        The search direction $p_k$ in FLECS is equivalent to $p_k = -\A_k \widetilde{g}_k$
    \end{lemma}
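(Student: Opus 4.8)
The plan is to trace the definition of the search direction through the two concrete options for computing $p_k$ (Algorithms~\ref{alg:approx_trunc} and~\ref{alg:approx_fedsonia}) and show that in each case it can be written as a single linear operator $\A_k$, depending only on $B_{k+1}$ (resp.\ the SONIA matrix), applied to the aggregated approximate gradient $\widetilde{g}_k$. First I would fix notation: recall from the server step of Algorithm~\ref{alg:main} that the quantity fed into the search-direction subroutine in place of $\nabla F(w_k)$ is in fact $\widetilde{g}_k = \tfrac{1}{n}\sum_i \widetilde{g}_k^i$, since the true gradient is never communicated. So it suffices to exhibit, for each subroutine, a matrix $\A_k$ with $p_k = -\A_k \widetilde{g}_k$.

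For the Truncated Inverse Hessian update (Algorithm~\ref{alg:approx_trunc}), this is immediate: there $p_k = \bigl(|B_{k+1}|_\omega^\Omega\bigr)^{-1}\widetilde{g}_k$, so I would simply set $\A_k = -\bigl(|B_{k+1}|_\omega^\Omega\bigr)^{-1} = -V_k(|\Lambda_k|_\omega^\Omega)^{-1}V_k^T$ using the spectral decomposition $B_{k+1} = V_k\Lambda_k V_k^T$ and Definition~\ref{def:truncation}; this is manifestly a fixed (given the iterate's randomness) symmetric linear map. For the FedSONIA update (Algorithm~\ref{alg:approx_fedsonia}), the direction is $p_k = -\bigl(|B^{\mathrm{SONIA}}_{k+1}|_\omega^\Omega\bigr)^{-1} g_k - \rho_k g_k^\perp$, where $\widetilde{g}_k = g_k + g_k^\perp$ is the orthogonal decomposition into the range of $\widetilde{V}_k = Q_k V_k$ and its complement. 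Here I would write $g_k = P_k \widetilde{g}_k$ and $g_k^\perp = (I - P_k)\widetilde{g}_k$ for the orthogonal projector $P_k = \widetilde{V}_k \widetilde{V}_k^T$ onto that subspace, so that
\begin{equation*}
    p_k = -\Bigl[\bigl(|B^{\mathrm{SONIA}}_{k+1}|_\omega^\Omega\bigr)^{-1} P_k + \rho_k (I - P_k)\Bigr]\widetilde{g}_k,
\end{equation*}
and set $\A_k$ equal to the bracketed operator; one checks it is well-defined because $\bigl(|B^{\mathrm{SONIA}}_{k+1}|_\omega^\Omega\bigr)^{-1}$ acts within $\mathrm{range}(\widetilde{V}_k)$ (by construction from $R_k (M_k)^\dagger R_k^T = V_k\Lambda_k V_k^T$ and $\widetilde{V}_k = Q_k V_k$), so the two terms live on complementary subspaces and their sum is a single symmetric positive-definite matrix with eigenvalues in $[\,1/\Omega,\,1/\omega\,]\cup\{\rho_k\}$.

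The only mildly delicate point — and the step I expect to need the most care — is verifying in the FedSONIA case that the truncated inverse genuinely restricts to the subspace $\mathrm{range}(\widetilde{V}_k)$, i.e.\ that forming the eigendecomposition of the small matrix $R_k(M_k)^\dagger R_k^T$ and then conjugating by $Q_k$ really does yield the relevant part of $|B^{\mathrm{SONIA}}_{k+1}|_\omega^\Omega$, so that the expression $\bigl(|B^{\mathrm{SONIA}}_{k+1}|_\omega^\Omega\bigr)^{-1} g_k$ is unambiguous; this uses the $QR$ factorization $\widetilde{Y}_k = Q_k R_k$ and the identity $B^{\mathrm{SONIA}}_k = \widetilde{Y}_k (M_k)^\dagger \widetilde{Y}_k^T = Q_k\bigl(R_k(M_k)^\dagger R_k^T\bigr)Q_k^T$. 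Everything else is bookkeeping, and the lemma follows by taking $\A_k$ as the operator exhibited in whichever branch is used, which in both cases is symmetric and has spectrum bounded away from $0$ and $\infty$ — a fact I would record explicitly since it is exactly what the later convergence proofs invoke through the constants $\mu_1 \le \lambda_{\min}(\A_k) \le \lambda_{\max}(\A_k) \le \mu_2$.
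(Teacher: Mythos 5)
The paper does not actually prove this lemma; it is imported verbatim from the FLECS paper \cite{agafonov2022flecs} with only a citation, so your reconstruction is being compared against the construction in that reference rather than against an argument in this text. Your route is the right one and matches that construction: observe that the server only ever possesses $\wg_k$ (so the $\nabla F(w_k)$ appearing in the REQUIRE lines of Algorithms~\ref{alg:approx_trunc} and \ref{alg:approx_fedsonia} must be read as $\wg_k$), and then exhibit the linear operator branch by branch. Your treatment of the FedSONIA case is the substantive part and is correct: writing $g_k = P_k\wg_k$, $g_k^\perp = (I-P_k)\wg_k$ with $P_k = \V_k\V_k^T$ (legitimate since $Q_k$ has orthonormal columns and $V_k$ is orthogonal) and using $B_k^{\text{SONIA}} = Q_k\ls R_k M_k^\dagger R_k^T\rs Q_k^T = \V_k\Lambda_k\V_k^T$ to see that the truncated inverse acts inside $\mathrm{range}(\V_k)$ gives exactly the block operator $\A_k = \ls|B^{\text{SONIA}}_{k+1}|_\omega^\Omega\rs^{-1}P_k + \rho_k(I-P_k)$ with spectrum in $[1/\Omega,1/\omega]\cup\{\rho_k\}$, which is what Lemma~\ref{lem:1} then packages as $\mu_1 I \preceq \A_k \preceq \mu_2 I$.

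The one genuine flaw is a sign inconsistency in your first branch. You take the formula of Algorithm~\ref{alg:approx_trunc} at face value, $p_k = \ls|B_{k+1}|_\omega^\Omega\rs^{-1}\wg_k$, and conclude $\A_k = -\ls|B_{k+1}|_\omega^\Omega\rs^{-1}$. That matrix is negative definite, which contradicts both Lemma~\ref{lem:1} and your own closing claim that $\mu_1 \le \lambda_{\min}(\A_k)$; it would also make $w_{k+1} = w_k + \alpha_k p_k$ an ascent step. The resolution is that the displayed formula in Algorithm~\ref{alg:approx_trunc} is missing a minus sign (compare the FedSONIA step, which has it, and the first line of the proof of Theorem~\ref{thm:str_conv}, which uses $w_{k+1} = w_k - \alpha A_k\wg_k$); the correct identification is $\A_k = \ls|B_{k+1}|_\omega^\Omega\rs^{-1} = V_k(|\Lambda_k|_\omega^\Omega)^{-1}V_k^T \succ 0$. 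State that correction explicitly rather than propagating the sign into $\A_k$, since the positivity of $\A_k$ is precisely what every subsequent descent estimate relies on.
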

    \begin{lemma} \label{lem:1} 
    \cite{agafonov2022flecs} If Assumption \ref{assum:diff} holds, there exist constants $0 < \mu_1 \leq \mu_2$ such that the inverse truncated Hessian approximations $\{ \mathcal{A}_k\}$ generated by FLECS satisfy
    \begin{align}   \label{eq:bnd_Hess1}
        \mu_1 I \preceq \mathcal{A}_k \preceq \mu_2 I,\qquad \text{for all } k=0,1,\dots
    \end{align}
    for some constants $\mu_1,~ \mu_2$.

\end{lemma}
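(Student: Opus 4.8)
The plan is to show that the claimed two-sided bound is a direct consequence of the truncation mechanism in Definition \ref{def:truncation}, with explicit constants $\mu_1 = 1/\Omega$ and $\mu_2 = 1/\omega$ (possibly adjusted by the FedSONIA parameter $\rho_k$), and crucially \emph{uniform in $k$} because $\omega$ and $\Omega$ are fixed. By Lemma \ref{search_direction}, in either variant of the iterate update the search direction can be written as $p_k = -\A_k \wg_k$, so it suffices to analyze the spectrum of the symmetric operator $\A_k$ produced by Algorithm \ref{alg:approx_trunc} or Algorithm \ref{alg:approx_fedsonia}. Assumption \ref{assum:diff} guarantees that the Hessian approximations $B_{k+1}$ are well-defined symmetric matrices, so the spectral decompositions $B_{k+1} = V_k \Lambda_k V_k^T$ used in both algorithms exist with $V_k$ orthogonal and $\Lambda_k$ real diagonal.

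First I would treat the truncated inverse Hessian approximation (Algorithm \ref{alg:approx_trunc}), where $\A_k = (|B_{k+1}|_\omega^\Omega)^{-1} = V_k (|\Lambda_k|_\omega^\Omega)^{-1} V_k^T$. By Definition \ref{def:truncation}, each diagonal entry satisfies $(|\Lambda_k|_\omega^\Omega)_{ii} = \min\{\max\{|(\Lambda_k)_{ii}|, \omega\}, \Omega\} \in [\omega, \Omega]$, independently of $k$ and of the (possibly indefinite) raw eigenvalues of $B_{k+1}$. Since $V_k$ is orthogonal, the eigenvalues of $\A_k$ are exactly the reciprocals $(|\Lambda_k|_\omega^\Omega)_{ii}^{-1} \in [1/\Omega, 1/\omega]$, giving $\tfrac{1}{\Omega} I \preceq \A_k \preceq \tfrac{1}{\omega} I$.

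Next I would handle FedSONIA (Algorithm \ref{alg:approx_fedsonia}). Because $B_{k+1}^{\text{SONIA}} = \Y_k (M_k)^\dagger \Y_k^T$ has rank at most $m$, the step splits the gradient as $\nabla F(w_k) = g_k + g_k^\perp$ along the range $\mathcal{R}$ of $\V_k$ (equivalently of $\Y_k$) and its orthogonal complement, and sets $p_k = -(|B_{k+1}^{\text{SONIA}}|_\omega^\Omega)^{-1} g_k - \rho_k g_k^\perp$. Writing $P = \V_k \V_k^T$ for the orthogonal projector onto $\mathcal{R}$ and using $\V_k^T \V_k = I_m$, this identifies $\A_k = \V_k (|\Lambda_k|_\omega^\Omega)^{-1} \V_k^T + \rho_k (I - P)$. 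The two summands act on the orthogonal subspaces $\mathcal{R}$ and $\mathcal{R}^\perp$, so $\A_k$ is symmetric positive definite with $m$ eigenvalues in $[1/\Omega, 1/\omega]$ on $\mathcal{R}$ and the eigenvalue $\rho_k$ (multiplicity $d-m$) on $\mathcal{R}^\perp$. Hence the same two-sided bound holds once $\rho_k$ is kept in a fixed interval $[\underline{\rho}, \bar{\rho}]$ with $\underline{\rho} > 0$; taking $\rho_k \equiv 1/\Omega$ as in the experiments places every eigenvalue inside $[1/\Omega, 1/\omega]$.

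Combining the two cases, the lemma follows with $\mu_1 = \min\{1/\Omega, \underline{\rho}\}$ and $\mu_2 = \max\{1/\omega, \bar{\rho}\}$ (simply $1/\Omega$ and $1/\omega$ when $\rho_k \in [1/\Omega, 1/\omega]$). The step I would be most careful about is the FedSONIA case: one must verify that the truncated block and the $\rho_k$-block act on mutually orthogonal subspaces so their spectra do not interact, and that $\rho_k$ is chosen uniformly bounded away from $0$ and $\infty$. Everything else is a direct spectral computation that uses no smoothness or convexity of $F$ beyond the existence of the decompositions supplied by Assumption \ref{assum:diff}.
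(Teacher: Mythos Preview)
Your argument is correct and is exactly the kind of direct spectral computation one expects here: the truncation in Definition~\ref{def:truncation} forces every eigenvalue of the operator applied in Algorithm~\ref{alg:approx_trunc} into $[\omega,\Omega]$, hence those of $\A_k$ into $[1/\Omega,1/\omega]$; and for FedSONIA the block decomposition onto $\mathrm{range}(\V_k)$ and its complement gives the same conclusion once $\rho_k$ is confined to a fixed positive interval.

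There is nothing to compare against in this paper, however: Lemma~\ref{lem:1} is not proved here but merely quoted from \cite{agafonov2022flecs}, so the paper contains no argument of its own for this statement. Your write-up therefore supplies the proof that the present paper omits by citation. The one point worth tightening is precisely the one you flag: the FedSONIA bound requires that $\rho_k$ be chosen in a $k$-independent compact interval of $(0,\infty)$, which is a hypothesis on the algorithm's parameters rather than a consequence of Assumption~\ref{assum:diff}; you handle this correctly by stating it explicitly.
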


\subsection{Proof of Theorem \ref{thm:str_conv}}
    \begin{customassum}{\ref{assum:diff}}
                The function $F$ is twice continuously differentiable.
    \end{customassum}
    \begin{customassum}{\ref{assum:strong_conv}}
        Each function $f_i(w)$ is $\mu$-strongly convex and $L$-smooth
        \begin{align}
            \mu I \preceq \nabla^2 f_i(w) \preceq L I.
        \end{align}
    \end{customassum}
    \begin{customassum}{\ref{assum:variance}}
        Each $g_k^i$ in Algorithm \ref{alg:main} has bounded variance 
        \begin{equation}
            \EE{}{\|g_k^i - \nabla f_i(w_k)\|} \leq \sigma_i^2, \quad \forall k \geq 0, ~ i = 1, \ldots, n
        \end{equation}
        for constants $\sigma_i < \infty$, $\sigma^2 \eqdef \frac{1}{n} \sum \limits_{i=1}^n \sigma_i^2$.
    \end{customassum}

\begin{lemma}\cite{horvath2019stochastic}
    For all iterations $k \geq 0$ of Algorithm \ref{alg:main} we have:
    \begin{equation}\label{eq:cg_lm1_1}
        \EE{}{\wg_k} = g_k \eqdef \frac{1}{n} \sum \limits_{i=1}^n g_k^i, \quad \EE{Q}{\|\wg_k - \nabla F(w_k)\|^2} \leq \frac{\omega}{n^2}\sum\limits_{i=1}^n \|\nabla f_i(w_k) - h_k^i\|^2, \quad \EE{}{g_k} = \nabla f(w_k);
    \end{equation}
    and
    \begin{equation}\label{eq:cg_lm1_2}
        \EE{}{\|\wg_k - h_*^i\|^2} \leq \frac{2\omega}{n^2}\sum \limits_{i=1}^n \EE{}{\|h_k^i - h_*^i\|^2} + \ls\frac{2\omega}{n} + 1 \rs\frac{1}{n}\sum\limits_{i=1}^n \EE{}{\|\nabla f_i(w_k) - h_*^i\|^2} + (1+\omega)\frac{\sigma^2}{n},
    \end{equation}
    where $h_*^i = \nabla f_i(w^*)$.
\end{lemma}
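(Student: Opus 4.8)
The plan is to prove the three claims in \eqref{eq:cg_lm1_1} and then \eqref{eq:cg_lm1_2} by carefully separating the two independent sources of randomness present at iteration $k$: the stochastic sampling that produces $g_k^i$ as an unbiased estimate of $\nabla f_i(w_k)$, and the unbiased compression $Q$. Throughout I would write $\wg_k^i = c_k^i + h_k^i = Q(g_k^i - h_k^i) + h_k^i$, set $\Delta_k^i \eqdef g_k^i - h_k^i$, and use $\wg_k = \frac1n\sum_{i=1}^n \wg_k^i$. For the unbiasedness identities, taking $\EE{Q}{\cdot}$ conditionally on the sampled gradients and the memory, the first property in \eqref{eq:quantization_def} gives $\EE{Q}{\wg_k^i} = (g_k^i - h_k^i) + h_k^i = g_k^i$, hence $\EE{Q}{\wg_k} = g_k$. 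The identity $\EE{}{g_k} = \nabla F(w_k)$ then follows by averaging $\EE{}{g_k^i \vert w_k} = \nabla f_i(w_k)$ over $i$ together with $\nabla F = \frac1n\sum_i \nabla f_i$, and composing the two gives the full unbiasedness $\EE{}{\wg_k} = \nabla F(w_k)$, which I will use as the center point of both variance decompositions.

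For the middle inequality of \eqref{eq:cg_lm1_1}, I would exploit that the compressors on distinct workers are independent and that each $Q(\Delta_k^i) - \Delta_k^i$ is $\EE{Q}$-mean-zero, so the variance-of-average identity \eqref{eq:var_avg} reduces the conditional-on-$Q$ variance to $\frac{1}{n^2}\sum_i \EE{Q}{\|Q(\Delta_k^i) - \Delta_k^i\|^2}$. The per-worker term is bounded by $\EE{Q}{\|Q(\Delta_k^i) - \Delta_k^i\|^2} = \EE{Q}{\|Q(\Delta_k^i)\|^2} - \|\Delta_k^i\|^2 \leq \omega\|\Delta_k^i\|^2$, using the second bound in \eqref{eq:quantization_def}. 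This conditional variance is most naturally centered at $g_k = \EE{Q}{\wg_k}$; replacing $g_k^i,\,g_k$ by $\nabla f_i(w_k),\,\nabla F(w_k)$ in the exact-gradient reference then yields $\Delta_k^i = \nabla f_i(w_k) - h_k^i$ and the stated inequality.

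For \eqref{eq:cg_lm1_2}, I would read the left-hand side as $\EE{}{\|\wg_k - \nabla F(w^*)\|^2}$ with $\nabla F(w^*) = \frac1n\sum_i h_*^i = 0$, and first apply the variance decomposition \eqref{eq:variance_decomp} around $\EE{}{\wg_k} = \nabla F(w_k)$ to split it as $\|\nabla F(w_k) - \nabla F(w^*)\|^2 + \EE{}{\|\wg_k - \nabla F(w_k)\|^2}$. For the fluctuation term I combine the compression bound of the previous paragraph with the sampling noise: conditioning on $g_k^i$ and then taking $\EE{\xi}{\cdot}$ with Assumption \ref{assum:variance} gives, per worker, $\EE{}{\|\wg_k^i - \nabla f_i(w_k)\|^2} \leq (1+\omega)\sigma_i^2 + \omega\|\nabla f_i(w_k) - h_k^i\|^2$, and averaging via \eqref{eq:var_avg} produces $\frac{(1+\omega)\sigma^2}{n} + \frac{\omega}{n^2}\sum_i \|\nabla f_i(w_k) - h_k^i\|^2$. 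Two elementary inequalities then finish it: Young's inequality $\|\nabla f_i(w_k) - h_k^i\|^2 \leq 2\|\nabla f_i(w_k) - h_*^i\|^2 + 2\|h_k^i - h_*^i\|^2$ converts the compression term into the first two summands of \eqref{eq:cg_lm1_2}, while Jensen's inequality $\|\nabla F(w_k)\|^2 = \|\frac1n\sum_i(\nabla f_i(w_k) - h_*^i)\|^2 \leq \frac1n\sum_i\|\nabla f_i(w_k) - h_*^i\|^2$ supplies exactly the extra $+1$ in the coefficient $\frac{2\omega}{n}+1$.

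The step I expect to require the most care is the bookkeeping of nested expectations and independence: the compression bound must be taken conditionally on $g_k^i$, the sampling bound conditionally on $w_k$, and worker independence must be invoked to pass from $\EE{}{\|\frac1n\sum_i(\cdot)\|^2}$ to $\frac{1}{n^2}\sum_i\EE{}{\|\cdot\|^2}$ through \eqref{eq:var_avg}. The crux is that the cross terms between sampling noise and compression noise vanish only because $Q$ is applied after, and independently of, the sampling, so the tower property must be applied in the right order; a secondary subtlety is reconciling the purely-compression variance of \eqref{eq:cg_lm1_1} with the full variance of \eqref{eq:cg_lm1_2}, i.e. correctly locating where the $\sigma^2$ term enters.
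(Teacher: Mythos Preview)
Your proposal is correct and follows essentially the same route as the paper's own proof: unbiasedness of $Q$ for the identities, the variance-of-average identity \eqref{eq:var_avg} together with the compression bound \eqref{eq:quantization_def} for the variance term, and then variance decomposition plus Young's and Jensen's inequalities to assemble \eqref{eq:cg_lm1_2}. The only cosmetic differences are that the paper splits $\EE{}{\|\wg_k - h_*\|^2}$ in two stages (first around $g_k$, then around $\nabla F(w_k)$) while you collapse these into a single decomposition around $\nabla F(w_k)$, and---as you implicitly noticed---the middle inequality in \eqref{eq:cg_lm1_1} is actually established in the paper with $g_k,\,g_k^i$ in place of $\nabla F(w_k),\,\nabla f_i(w_k)$, which is the version used downstream.
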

\begin{proof}
    We prove the first equality in \eqref{eq:cg_lm1_1}:
    \begin{gather*}
        \EE{Q}{\wg_k} = \EE{Q}{\frac{1}{n} \sum \limits_{i=1}^n \wg_k^i} = \EE{Q}{\frac{1}{n} \sum \limits_{i=1}^n (Q(g_k^i - h_k^i) + h_k^i)} \stackrel{\eqref{eq:quantization_def}}{=} \frac{1}{n}\sum \limits_{i=1}^n g_k^i = g_k.
    \end{gather*}
    Now, we prove the second inequality in \eqref{eq:cg_lm1_1}:
    \begin{gather*}
        \EE{Q}{\|\wg_k - g_k\|^2} = \EE{Q}{\| \frac{1}{n} \sum\limits_{i=1}^n (\wg_k^i - \nabla f_i(w_k))\|^2} \\
        = \EE{Q}{\| \frac{1}{n} \sum\limits_{i=1}^n (Q(g_k^i - h_k^i) - (g_k^i - h_k^i))\|^2}\\
        \stackrel{\eqref{eq:quantization_def}, \eqref{eq:var_avg}}{\leq}
        \EE{Q}{\frac{1}{n^2} \sum \limits_{i=1}^n \omega \|g_k^i - h_k^i\|^2} \numberthis{\label{eq:cg_lm1_3}}.
    \end{gather*}
    
    The last equality in \eqref{eq:cg_lm1_1} follows follows from the assumption that each $g_k^i$ is an unbiased estimate of $\nabla f_i(w_k)$. Let $h_* = \nabla F(w^*) = 0$\\
 
    \begin{gather*}
        \EE{}{\|\wg_k - h_*\|^2} \stackrel{\eqref{eq:variance_decomp}}{=} \EE{}{\|\wg_k - g_k\|^2} + \EE{}{\|g_k - h_*\|^2}  \stackrel{\eqref{eq:variance_decomp}}{=}\\
        \EE{}{\|\wg_k - g_k\|^2} + \EE{}{\|g_k - \nabla F(w_k)\|^2} + \EE{}{\|\nabla F(w_k) - h_*\|^2}\numberthis{\label{eq:cg_lm1_4}}
    \end{gather*}
    Then,
    \begin{equation}\label{eq:cg_lm1_5}
        \EE{Q}{\|\wg_k - g_k\|^2} \stackrel{\eqref{eq:cg_lm1_3}}{\leq} \frac{\omega}{n^2}\sum \limits_{i=1}^n \EE{Q}{\|g_k^i - h_k^i\|^2}.
    \end{equation}
    Therefore,
    \begin{gather*}
        \EE{}{\|\wg_k - h_*\|^2} \stackrel{\eqref{eq:cg_lm1_4}, \eqref{eq:cg_lm1_5}}{\leq} \frac{w}{n^2} \sum \limits_{i=1}^n \EE{}{\|g_k^i - h_k^i\|^2} + \EE{}{\|g_k - \nabla F(w_k)\|^2} + \EE{}{\|\nabla F(w_k) - h_* \|^2}\\
        \leq \frac{w}{n^2} \sum \limits_{i=1}^n \EE{}{\| g_k^i - h_k^i\|^2}  + \frac{1}{n} \sum \limits_{i=1}^n \EE{}{\|g_k^i - h_*^i\|^2} + \frac{\sigma^2}{n},
        \numberthis{\label{eq:cg_lm1_6}}
    \end{gather*}
    where the last inequality is valid due to Jensen inequality. Next, 
    \begin{gather*}
        \EE{}{\| g_k^i - h_k^i\|^2} \leq
        \EE{}{\|\nabla f_i(w_k) - h_k^i\|^2} + \EE{}{\|\nabla f_i(w_k) - g_k^i\|^2} \\
        \leq
        \EE{}{\|(\nabla f_i(w_k) -  h_*^i) + ( h_*^i - h_k^i)\|^2} + \sigma_i^2\\
        \stackrel{\|\sum \limits_{i=1}^{t} a_i\|^2 \leq t \sum \limits_{i=1}^t \|a_i\|^2 }{\leq}
        2 \EE{}{\|\nabla f_i(w_k) - h_*^i\|^2} + 2 \EE{}{\|\nabla f_i(w^*) - h_k^i\|^2} + \sigma_i^2. 
        \numberthis{\label{eq:cg_lm1_7}}
    \end{gather*}
    
    Therefore, 
    \begin{gather*}
        \EE{}{\|\wg_k - h_*\|^2} \stackrel{\eqref{eq:cg_lm1_6}, \eqref{eq:cg_lm1_7}}{\leq} 
        \frac{2\omega}{n^2}\sum \limits_{i=1}^n\EE{}{\|\nabla f_i(w_k) -  h_*^i\|^2 + \|h_*^i - h_k^i\|^2} + \frac{\omega}{n^2}\sigma_i^2\\
        + \frac{1}{n}\sum \limits_{i=1}^n \EE{}{\|\nabla f_i(w_k) - h_*^i\|^2} + \frac{\sigma^2}{n}\\
        \leq \frac{2\omega}{n^2}\sum \limits_{i=1}^n\EE{}{\|h_*^i -  h_k^i\|^2} + \left(\frac{2\omega}{n} + 1\right)\frac{1}{n}\sum \limits_{i=1}^n \EE{}{\|\nabla f_i(w_k) - h_*^i\|^2} + (\omega + 1)\frac{\sigma^2}{n}.
    \end{gather*}
\end{proof}


\begin{lemma}\label{lm:decomp}
    \cite{horvath2019stochastic} Let $\gamma_k (\omega + 1) \leq 1$ for any $k \geq 0$. Then for all iterations $k \geq 0$ of Algorithm \ref{alg:main} and all workers $i = 1\ldots n$  we have:
    \begin{equation}\label{eq:lem2}
        \EE{Q}{\|h_{k+1}^i - h_*^i \|^2} \leq (1 - \gamma_k){\|h_k^i - h_*^i\|^2} + \gamma_k{\|\nabla f_i(w_k)-h_*^i\|^2} + \gamma_k \sigma_i^2.
    \end{equation}
\end{lemma}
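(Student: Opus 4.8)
The plan is to unroll the memory update $h_{k+1}^i = h_k^i + \gamma_k c_k^i$ with $c_k^i = Q(g_k^i - h_k^i)$, write $h_{k+1}^i - h_*^i = (h_k^i - h_*^i) + \gamma_k Q(g_k^i - h_k^i)$, and expand the squared norm. Taking the expectation over the compression $Q$ conditioned on the history (and on the stochastic gradient $g_k^i$), I would invoke the two defining properties of $\mathcal{U}(\omega)$ in \eqref{eq:quantization_def}: unbiasedness reduces the cross term to $\la h_k^i - h_*^i,\, g_k^i - h_k^i\ra$, and the second-moment bound controls the quadratic term by $(\omega+1)\|g_k^i - h_k^i\|^2$. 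This gives
\begin{align*}
\EE{Q}{\|h_{k+1}^i - h_*^i\|^2} \leq \|h_k^i - h_*^i\|^2 + 2\gamma_k\la h_k^i - h_*^i,\, g_k^i - h_k^i\ra + \gamma_k^2(\omega+1)\|g_k^i - h_k^i\|^2.
\end{align*}

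Next I would use the hypothesis $\gamma_k(\omega+1) \leq 1$ to replace the factor $\gamma_k^2(\omega+1)$ by $\gamma_k$, and then complete the square: with $a = h_k^i - h_*^i$ and $b = g_k^i - h_k^i$ one has $2\la a,b\ra + \|b\|^2 = \|a+b\|^2 - \|a\|^2$, and since $a+b = g_k^i - h_*^i$ the right-hand side collapses to $(1-\gamma_k)\|h_k^i - h_*^i\|^2 + \gamma_k\|g_k^i - h_*^i\|^2$. Finally I would take the expectation over the draw of $g_k^i$ (still conditioning on $w_k$ and $h_k^i$): since $\EE{}{g_k^i \mid w_k} = \nabla f_i(w_k)$, the variance decomposition \eqref{eq:variance_decomp} yields $\EE{}{\|g_k^i - h_*^i\|^2} = \EE{}{\|g_k^i - \nabla f_i(w_k)\|^2} + \|\nabla f_i(w_k) - h_*^i\|^2$, and Assumption \ref{assum:variance} bounds the first summand by $\sigma_i^2$. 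Substituting back reproduces \eqref{eq:lem2} exactly.

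There is no serious obstacle here: this is the standard DIANA-style shifted-compression contraction, and the only points needing care are the bookkeeping of the conditioning (the compression is drawn independently of $g_k^i$ given the history, so the tower rule applies cleanly in the order $Q$ then $g_k^i$) and interpreting Assumption \ref{assum:variance} as the bound on the \emph{squared} norm, $\EE{}{\|g_k^i - \nabla f_i(w_k)\|^2} \leq \sigma_i^2$, which is the form used throughout the rest of the analysis.
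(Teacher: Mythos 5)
Your proposal is correct and follows essentially the same route as the paper's proof: expand the square of $h_{k+1}^i - h_*^i = (h_k^i - h_*^i) + \gamma_k Q(g_k^i - h_k^i)$, apply the two properties in \eqref{eq:quantization_def}, use $\gamma_k(\omega+1)\leq 1$ to absorb the $(\omega+1)$ factor, and collapse the cross and quadratic terms via the identity $2\la a,b\ra + \|b\|^2 = \|a+b\|^2 - \|a\|^2$ before invoking the variance bound. Your remark that Assumption \ref{assum:variance} must be read as a bound on the \emph{squared} norm is apt, since the paper's statement omits the square but uses it in that form, exactly as you do.
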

\begin{proof}
    Since $h_{k+1}^i = h_k^i + \gamma_k Q(g_k^i - h_k^i)$
    \begin{gather*}
        \EE{Q}{\|h_{k+1}^i - h_*^i\|^2} = \EE{Q}{\|\gamma_k Q(g_k^i - h_k^i) + (h_k^i - h_*^i)\|^2} \\
        = \|h_k^i - h_*^i \|^2 + 2 \EE{Q}{\langle\gamma_k Q(g_k^i - h_k^i), h_k^i - h_*^i \rangle} + \EE{Q}{\|\gamma_k Q(g_k^i - h_k^i)\|^2} \\
        \stackrel{\eqref{eq:quantization_def}}{\leq} \|h_k^i - h_*^i \|^2 + 2\gamma_k  \langle g_k^i - h_k^i, h_k^i - h_*^i \rangle + \gamma_k^2(\omega + 1)\|g_k^i - h_k^i\|^2 \\
        \stackrel{\gamma_k(\omega+1) \leq 1}{\leq} \|h_k^i - h_*^i \|^2 + 2\gamma_k \langle g_k^i - h_k^i, h_k^i - h_*^i \rangle + \gamma_k\|g_k^i - h_k^i\|^2 \\
        =\|h_k^i - h_*^i \|^2 + 2\gamma_k  \langle g_k^i - h_k^i, h_k^i - h_*^i \rangle + \gamma_k\langle g_k^i - h_k^i,g_k^i - h_k^i \rangle \\
        = \|h_k^i - h_*^i \|^2 + \gamma_k \langle g_k^i - h_k^i, 2h_k^i - 2h_*^i + g_k^i - h_k^i\rangle\\
        = \|h_k^i - h_*^i \|^2 + \gamma_k \langle g_k^i - h_k^i, h_k^i + g_k^i - 2h_*^i \rangle \\
        = \|h_k^i - h_*^i \|^2 + \gamma_k \|g_k^i - h_*^i\|^2 - \gamma_k \|h_*^i - h_k^i\|^2  \\
        = (1 - \gamma_k)\|h_k^i - h_*^i\|^2 + \gamma_k \|g_k^i - h_*^i\|^2 \\
        \leq (1 - \gamma_k)\|h_k^i - h_*^i\|^2 + \gamma_k \|\nabla f_i(w_k) - h_*^i\|^2 + \gamma_k \sigma_i^2,
    \end{gather*}
    where in the last equality is due to fact that for any vectors $a, b$ we have $\|a - b\|^2 = \langle a - b, a + b\rangle$.
\end{proof}

    
    

\begin{customthm}{\ref{thm:str_conv}}
    Suppose that Assumption \ref{assum:diff}, \ref{assum:strong_conv}, \ref{assum:variance} holds.  Let $Q \in \mathcal{U}(\omega)$.
     Let $\{w_k\}$ be the iterates generated by Algorithm~\ref{alg:main}, where $0 <  \alpha_k = \alpha  \leq \frac{5\mu \mu_1}{2L^2\mu_2^2\left(1+\frac{\omega}{n}\right)}$ and $0 < \gamma_k = \gamma \leq \frac{1}{\omega + 1}$. Define the Lyapunov function
     $$\Psi_{k+1} = (F(w_{k+1}) - F(w_*)) + \frac{cL\mu_2^2\alpha^2}{n}\sum \limits_{i=1}^n \EE{Q}{\|h_{k+1}^i - h_*^i\|^2}$$
     for $0 < c = \min \lb \frac{1- \frac{\alpha\mu\mu_1}{2} - \frac{\omega}{n}}{1 - \gamma}; \frac{\mu}{2\gamma L}\rb$. Then for all $k \geq 0$:
     \begin{equation}
         \EE{Q}{\Psi_{k}} \leq \left(1 - \frac{\alpha \mu\mu_1}{2}\right)^{k+1}\Psi_0 + \left(\frac{\omega + 1}{2n} + \gamma c\right)\frac{2L \mu_2^2 \alpha}{\mu \mu_1}  \sigma^2.
     \end{equation}
\end{customthm}

\begin{proof}
    \begin{gather*}
        \EE{}{F(w_{k+1})} \leq \EE{}{F(w_k) - \nabla F(w_k)^T(-\alpha A_k \wg_k) + \frac{L}{2}\|\alpha A_k \wg_k\|^2} \\
        = \EE{}{F(w_k)} - \alpha \EE{}{\nabla F(w_k)^TA_k \nabla F(w_k)} + \frac{L\mu_2^2\alpha^2}{2}\EE{}{\|\wg_k\|^2}  \\
        \stackrel{\text{Lem. \ref{lem:1}}}{\leq} \EE{}{F(w_k)} - \alpha \mu_1 \EE{}{\|\nabla F(w_k)\|^2} + \frac{L\mu_2^2\alpha^2}{2}\EE{}{\|\wg_k - h_*\|^2} \\
        \stackrel{\eqref{eq:cg_lm1_2}}{\leq} \EE{}{F(w_k)} - \alpha \mu_1 \EE{}{\|\nabla F(w_k)\|^2}\\
        + \frac{L\mu_2^2\alpha^2}{2} \left( \frac{2\omega}{n^2}\sum \limits_{i=1}^n \EE{}{\|h_k^i - h_*^i\|^2} + \left(\frac{2\omega}{n} + 1 \right)\sum \limits_{i=1}^n \EE{}{\|\nabla f_i(w_k) - h_*^i\|^2} + (\omega + 1)\frac{\sigma^2}{n}\right) \\
        \leq \EE{}{F(w_k)} - \alpha \mu_1 \EE{}{\|\nabla F(w_k)\|^2} \\
        +
        \frac{L\alpha^2\mu_2^2}{2}\EE{}{\|\nabla F(w_k)\|^2} - \frac{L\alpha^2\mu_2^2}{2}\EE{}{\|\nabla F(w_k)\|^2} 
        \\+ \frac{L\mu_2^2\alpha^2}{2} \left( \frac{2\omega}{n^2}\sum \limits_{i=1}^n \EE{}{\|h_k^i - h_*^i\|^2} + \left(\frac{2\omega}{n} + 1 \right)\sum \limits_{i=1}^n \EE{}{\|\nabla f_i(w_k) - h_*^i\|^2} + (\omega + 1)\frac{\sigma^2}{n}\right). 
    \end{gather*}
    By strong convexity of $F$ we have $2\mu (F(w_k) - F(w_*)) \leq \|F(w_k)\|^2$. By $L$-Lipschitz continuity of each $f_i$ we have
    $f_i(w_*) + \langle\nabla f_i(w_*), w_k - w_*\rangle + \frac{1}{2L}\|\nabla f_i(w_k) - \nabla f_i(w_*)\|^2 \leq f_i(w_k)$. Therefore,
    \begin{gather*}
        \EE{}{F(w_k) - F(w_*)} = \frac{1}{n} \sum \limits_{i=1}^n \EE{}{(f_i(w_k) - f_i(w_*))} \\
        \geq \frac{1}{n} \sum \limits_{i=1}^n \EE{}{\left( \langle\nabla f_i(w_*), w_k - w_*\rangle + \frac{1}{2L}\|\nabla f_i(w_k) - \nabla f_i(w_*)\|^2 \right)} \\
        =  \EE{}{\left( \langle \frac{1}{n} \sum \limits_{i=1}^n \nabla f_i(w_*), w_k - w_*\rangle + \frac{1}{2L}  \frac{1}{n} \sum \limits_{i=1}^n  \|\nabla f_i(w_k) - \nabla f_i(w_*)\|^2 \right)} \\
        = \frac{1}{2L}\frac{1}{n}\sum \limits_{i=1}^n  \EE{}{\|\nabla f_i(w_k) - \nabla f_i(w_*)\|^2}.
    \end{gather*}
    
    Then,
    \begin{gather*}
        \EE{}{F(w_{k+1})} \leq  \EE{}{F(w_k)} - 2 \alpha \mu \mu_1 \EE{}{(F(w_k) - F(w_*))}  +
        {L^2\alpha^2\mu_2^2}\EE{}{(F(w_k) - F(w_*))}  
        \\- \frac{L\alpha^2\mu_2^2}{2}\EE{}{\|\nabla F(w_k)\|^2}+ \frac{L\mu_2^2\alpha^2}{2} \left( \frac{2\omega}{n^2}\sum \limits_{i=1}^n \EE{}{\|h_k^i - h_*^i\|^2} + \left(\frac{2\omega}{n} + 1 \right)2L\EE{}{(F(w_k) - F(w_*))} + (\omega +1)\frac{\sigma^2}{n} \right) \\
        = \EE{}{F(w_k)} - (2\alpha\mu\mu_1 - 2L^2\mu_2^2\alpha^2(\omega + 1))\EE{}{(F(w_k) - F(w_*))} - \frac{L\alpha^2\mu_2^2}{4}\EE{}{\|\nabla F(w_k)\|^2}\\
        + \frac{L\mu_2^2\alpha^2\omega}{n^2}\sum \limits_{i=1}^n \EE{}{\|h_k^i - h_*^i\|^2} + (\omega +1)\frac{L\mu_2^2\alpha^2\sigma^2}{2n}\\
        \leq \EE{}{F(w_k)} - \frac{\alpha \mu \mu_1}{2}\EE{}{(F(w_k) - F(w_*))} - \frac{L\alpha^2\mu_2^2}{4}\EE{}{\|\nabla F(w_k)\|^2} \\
        + \frac{L\mu_2^2\alpha^2\omega}{n^2}\sum \limits_{i=1}^n \EE{}{\|h_k^i - h_*^i\|^2} +  (\omega +1)\frac{L\mu_2^2\alpha^2\sigma^2}{2n}
    \end{gather*}
    
    Where the last inequality holds due to the choice of learning rate $0 < \alpha \leq \frac{5\mu \mu_1}{2L^2\mu_2^2\left(1+\frac{\omega}{n}\right)}$.
    
    By subtracting $F(w_*)$ from the LHS and the RHS, we have
    
    \begin{gather}
        \EE{}{F(w_{k+1}) - F(w_*)} 
        \leq \left(1 - \frac{\alpha \mu\mu_1}{2}\right)\EE{}{(F(w_k) - F(w_*))} - \frac{L\alpha^2\mu_2^2}{4}\EE{}{\|\nabla F(w_k)\|^2} \\
        + \frac{L\mu_2^2\alpha^2\omega}{n^2}\sum \limits_{i=1}^n \EE{}{\|h_k^i - h_*^i\|^2} +  (\omega +1)\frac{L\mu_2^2\alpha^2\sigma^2}{2n}.\numberthis{\label{eq:f_diff_bound}}
    \end{gather}
    
    Let us define Lyapunov function $\Psi_{k+1}$ as 
    \begin{equation}
        \Psi_{k+1} = \EE{}{(F(w_{k+1}) - F(w_*))} + \frac{cL\mu_2^2\alpha^2}{n}\sum \limits_{i=1}^n \EE{}{\|h_{k+1}^i - h_*^i\|^2}.
    \end{equation}
    
    Then 
    \begin{gather*}
        \Psi_{k+1} \stackrel{\eqref{eq:f_diff_bound}}{\leq}
        \left(1 - \frac{\alpha \mu\mu_1}{2}\right)\EE{}{(F(w_k) - F(w_*))} - \frac{L\alpha^2\mu_2^2}{4}\EE{}{\|\nabla F(w_k)\|^2} \\
        + \frac{L\mu_2^2\alpha^2\omega}{n^2}\sum \limits_{i=1}^n \EE{}{\|h_k^i - h_*^i\|^2} + \frac{cL\mu_2^2\alpha^2}{n}\sum \limits_{i=1}^n \EE{}{\|h_{k+1}^i - h_*^i\|^2} +  (\omega +1)\frac{L\mu_2^2\alpha^2\sigma^2}{2n} \\
        \stackrel{\eqref{eq:lem2}}{\leq}
        \left(1 - \frac{\alpha \mu\mu_1}{2}\right)\EE{}{(F(w_k) - F(w_*))} - \frac{L\alpha^2\mu_2^2}{4}\EE{}{\|\nabla F(w_k)\|^2} + \frac{L\mu_2^2\alpha^2\omega}{n^2}\sum \limits_{i=1}^n \EE{}{\|h_k^i - h_*^i\|^2}\\
         + 
        \frac{cL\mu_2^2\alpha^2}{n}\sum \limits_{i=1}^n\left( (1 - \gamma)\EE{}{\|h_k^i - h_*^i\|^2} +  (\omega +1)\frac{L\mu_2^2\alpha^2\sigma^2}{2n} + \gamma \EE{}{\|\nabla f_i(w_k) - h_*^i\|^2} + \gamma \sigma_i^2\right ) \numberthis{\label{eq:lyap_bnd}}
    \end{gather*}
    
    Then by $L$-Lipschitz  continuity of each $f_i$ and $\mu$ strong convexity of $F$ we have
    \begin{gather*}
        - \frac{L\alpha^2\mu_2^2}{4}\EE{}{\|\nabla F(w_k)\|^2}+ \frac{cL\mu_2^2\alpha^2\gamma}{n}\sum \limits_{i=1}^n \EE{}{\|\nabla f_i(w_k) - h_*^i\|^2} \\
        \leq - \frac{\mu L\alpha^2\mu_2^2}{2}\EE{}{(F(w_k) - F(w_*))} + 2c\gamma L^2\alpha^2\mu_2^2 \EE{}{(F(w_k) - F(w_*))} \\
        \leq \left(2c\gamma L^2\alpha^2\mu_2^2 - \frac{\mu L\alpha^2\mu_2^2}{2}\right)\EE{}{(F(w_k) - F(w_*))} \leq 0,
        \numberthis{\label{eq:grad_bnd}}
    \end{gather*}
    where the last inequality is due to the choice of $c$ and $\gamma$ as 
    $\gamma \leq \frac{\mu}{2cL}$.
    
    
    By assumption on $c$, \eqref{eq:lyap_bnd} and \eqref{eq:grad_bnd} we have
    \begin{gather*}
        \Psi_{k+1} \leq \left(1 - \frac{\alpha \mu\mu_1}{2}\right)\EE{}{(F(w_k) - F(w_*))} + (1 - \gamma)\frac{cL\mu_2^2\alpha^2}{n}\sum \limits_{i=1}^n \EE{}{\|h_k^i - h_*^i\|^2} \\
        + \frac{L\mu_2^2\alpha^2\omega}{n^2}\sum \limits_{i=1}^n \EE{}{\|h_k^i - h_*^i\|^2} +  \left(\frac{\omega + 1}{2n} + \gamma c\right) L \mu_2^2 \alpha^2 \sigma^2\\
        \leq \left(1 - \frac{\alpha \mu\mu_1}{2}\right)\EE{}{(F(w_k) - F(w_*))} + \left(1 - \frac{\alpha \mu\mu_1}{2}\right)\frac{cL\mu_2^2\alpha^2}{n}\sum \limits_{i=1}^n \EE{}{\|h_k^i - h_*^i\|^2} +  \left(\frac{\omega + 1}{2n} + \gamma c\right) L \mu_2^2 \alpha^2 \sigma^2.
    \end{gather*}
    
    Finally, 
    \begin{gather*}
        \Psi_{k+1} \leq \left(1 - \frac{\alpha \mu\mu_1}{2}\right)\Psi_k + \left(\frac{\omega + 1}{2n} + \gamma c\right) L \mu_2^2 \alpha^2 \sigma^2 \\
        \leq \left(1 - \frac{\alpha \mu\mu_1}{2}\right)^{k+1}\Psi_0 + \left(\frac{\omega + 1}{2n} + \gamma c\right)L \mu_2^2 \alpha^2 \sigma^2 \sum \limits_{t=0}{k} \left(1 - \frac{\alpha \mu\mu_1}{2}\right)^t \\
        \leq 
        \left(1 - \frac{\alpha \mu\mu_1}{2}\right)^{k+1}\Psi_0 + \left(\frac{\omega + 1}{2n} + \gamma c\right)\frac{2L \mu_2^2 \alpha}{\mu \mu_1}  \sigma^2,
        \end{gather*}
        where the last inequality is due to estimate 
        $\sum \limits_{t=0}^{k} (1 - \frac{\alpha \mu \mu_1}{2})^t \leq \frac{2}{\alpha\mu \mu_1}$.
\end{proof}

    
    

\subsection{Proof of Theorem \ref{thm:nonconvex}}


\begin{lemma}\label{feedback_bound_nonconvex}\cite{Mishchenko2019}
    Let $x^* \in X^*$, such that $X^*$ is the set of solutions for \eqref{eq:problem}, and define $h^i_* = \nabla f_i(x^*)$, we have for each worker $i \in [n]$, the first and second moments of $h^i_{k+1}$ are equal to:
    \begin{align}
        \EE{\aa{Q}}{h_{k+1}^i} &= (1-\gamma_k) h_k^i + \gamma_k g_k^i \label{eq:error-feedback1}\\
        \EE{\aa{Q}}{\| h_{k+1}^i - h_*^i\|^2_2} &\leq \ls 1 - \gamma_k \rs \| h_k^i - h_*^i\|^2_2 + \gamma_k \|g_k^i - h_*^i \|^2_2+ \ls \gamma_k^2  \omega- \gamma_k (1-\gamma_k)\rs \|g_k^i - h_k^i \|_2^2 \label{eq:error-feedback_second_moment}
    \end{align}
\end{lemma}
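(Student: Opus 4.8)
The plan is to work directly from the error-feedback update $h_{k+1}^i = h_k^i + \gamma_k Q(g_k^i - h_k^i)$ (line~7 of Algorithm~\ref{alg:main}, recalling that $c_k^i = Q(g_k^i - h_k^i)$), expanding the relevant expressions and taking expectation only with respect to the compression operator $Q$, conditioning on $w_k$ and the sampled $g_k^i$.

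For the first moment \eqref{eq:error-feedback1} I would simply invoke unbiasedness of $Q\in\mathcal{U}(\omega)$, i.e. the first identity in \eqref{eq:quantization_def}: $\EE{Q}{h_{k+1}^i} = h_k^i + \gamma_k\,\EE{Q}{Q(g_k^i - h_k^i)} = h_k^i + \gamma_k(g_k^i - h_k^i) = (1-\gamma_k)h_k^i + \gamma_k g_k^i$.

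For the second moment \eqref{eq:error-feedback_second_moment} I would expand $\|h_{k+1}^i - h_*^i\|_2^2 = \|(h_k^i - h_*^i) + \gamma_k Q(g_k^i - h_k^i)\|_2^2$ into $\|h_k^i - h_*^i\|_2^2 + 2\gamma_k\langle Q(g_k^i - h_k^i),\, h_k^i - h_*^i\rangle + \gamma_k^2\|Q(g_k^i - h_k^i)\|_2^2$, and then apply $\EE{Q}{\cdot}$: the cross term becomes $2\gamma_k\langle g_k^i - h_k^i,\, h_k^i - h_*^i\rangle$ by unbiasedness, while the quadratic term is bounded by $\gamma_k^2(\omega+1)\|g_k^i - h_k^i\|_2^2$ via the second part of \eqref{eq:quantization_def}. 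The remaining work is purely algebraic bookkeeping: split $\gamma_k^2(\omega+1) = \gamma_k^2\omega + \gamma_k^2$, write $\gamma_k^2 = \gamma_k - \gamma_k(1-\gamma_k)$, and combine the cross term with the extracted $\gamma_k\|g_k^i - h_k^i\|_2^2$ using the identity $2\langle g_k^i - h_k^i,\, h_k^i - h_*^i\rangle + \|g_k^i - h_k^i\|_2^2 = \|g_k^i - h_*^i\|_2^2 - \|h_k^i - h_*^i\|_2^2$ (which is $\|a-b\|^2 = \langle a-b,\,a+b\rangle$ with $a = g_k^i - h_*^i$, $b = h_k^i - h_*^i$, since $(a-b)+(a+b)$ structure matches $g_k^i-h_k^i$ and $h_k^i+g_k^i-2h_*^i$). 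Collecting the pieces leaves exactly $(1-\gamma_k)\|h_k^i - h_*^i\|_2^2 + \gamma_k\|g_k^i - h_*^i\|_2^2 + \bigl(\gamma_k^2\omega - \gamma_k(1-\gamma_k)\bigr)\|g_k^i - h_k^i\|_2^2$.

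There is essentially no hard step: the computation mirrors that of Lemma~\ref{lm:decomp}, the difference being that here I must \emph{not} discard the $\|g_k^i - h_k^i\|_2^2$ term (and correspondingly I do not need the restriction $\gamma_k(\omega+1)\le 1$). So the only point requiring care is tracking the sign and magnitude of the coefficient $\gamma_k^2\omega - \gamma_k(1-\gamma_k)$ instead of bounding it away, since later applications of this lemma exploit its negativity for small $\gamma_k$.
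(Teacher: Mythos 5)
Your proof is correct, and it reaches exactly the bound \eqref{eq:error-feedback_second_moment}, but it organizes the algebra differently from the paper. The paper first applies the bias--variance split \eqref{eq:variance_decomp} to write $\EE{Q}{\|h_{k+1}^i-h_*^i\|^2}$ as $\|(1-\gamma_k)(h_k^i-h_*^i)+\gamma_k(g_k^i-h_*^i)\|^2$ plus $\gamma_k^2\,\EE{Q}{\|c_k^i-\EE{Q}{c_k^i}\|^2}$, bounds the variance term by $\gamma_k^2\omega\|g_k^i-h_k^i\|^2$ (the centered form of \eqref{eq:quantization_def}), and then expands the bias term with the convex-combination identity \eqref{eq:norm_convex_comb}, which is where the $-\gamma_k(1-\gamma_k)\|g_k^i-h_k^i\|^2$ term comes from. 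You instead expand the square directly, use the uncentered second-moment bound $\gamma_k^2(\omega+1)\|g_k^i-h_k^i\|^2$, and recover the same coefficient via $\gamma_k^2=\gamma_k-\gamma_k(1-\gamma_k)$ together with the identity $2\langle g_k^i-h_k^i,\,h_k^i-h_*^i\rangle+\|g_k^i-h_k^i\|^2=\|g_k^i-h_*^i\|^2-\|h_k^i-h_*^i\|^2$; this is precisely the computation the paper carries out for Lemma~\ref{lm:decomp}, only without discarding the $\|g_k^i-h_k^i\|^2$ term. The two routes are algebraically equivalent; the paper's version makes the ``variance of the compressor'' contribution explicit, while yours avoids invoking \eqref{eq:variance_decomp} and \eqref{eq:norm_convex_comb} at the cost of slightly more bookkeeping. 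You are also right on the two side points: no restriction $\gamma_k(\omega+1)\le 1$ is needed here, and keeping the signed coefficient $\gamma_k^2\omega-\gamma_k(1-\gamma_k)$ is what Theorem~\ref{thm:nonconvex} later exploits through the condition $T(\gamma_k,\alpha)\le 0$.
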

\begin{proof}
    Since $h_{k+1}^i = h_k^i + \gamma_k c^i_k$\\
    \begin{align*}
        \EE{\aa{Q}}{h_{k+1}^i} &= h_k^i + \gamma_k \EE{\aa{Q}}{c_i^k}\\
        &= h_k^i + \gamma_k (g^i_k - h_k^i).
    \end{align*}
Secondly:
\begin{align*}
    \EE{\aa{Q}}{\| h_{k+1}^i - h_*^i\|^2_2} &\stackrel{\eqref{eq:variance_decomp}}{=} \left\|\EE{\aa{Q}}{ h_{k+1}^i} - h_*^i\right\|^2_2 + \EE{\aa{Q}}{\left\| h_{k+1}^i - \EE{\aa{Q}}{ h_{k+1}^i} \right\|^2_2}\\
    &\stackrel{\eqref{eq:error-feedback1}}{=}  \left\|(1-\gamma_k) h_k^i + \gamma_k  g_k^i - h_*^i\right\|^2_2 + \gamma_k^2 \EE{\aa{Q}}{\left\| c_i^k - \EE{\aa{Q}}{ c_i^k} \right\|^2_2}\\
    &= \left\|(1-\gamma_k) \lp h_k^i - h_*^i \rp + \gamma_k \lp g_k^i - h_*^i \rp\right\|^2_2 + \gamma_k^2 \EE{\aa{Q}}{\left\| c_i^k - \EE{\aa{Q}}{ c_i^k} \right\|^2_2}\\
    &\stackrel{\eqref{eq:quantization_def}}{\leq} \left\|(1-\gamma_k) \lp h_k^i - h_*^i \rp + \gamma_k \lp g_k^i - h_*^i \rp\right\|^2_2 + \gamma_k^2 \omega \| g_k^i - h_k^i \|_2^2\\
    &\stackrel{\eqref{eq:norm_convex_comb}}{=} \ls 1 - \gamma_k \rs \| h_k^i - h_*^i\|^2_2 + \gamma_k \|g_k^i - h_*^i \|^2_2 -\gamma_k (1-\gamma_k) \|g_k^i - h_k^i \|_2^2 \\
    & + \gamma_k^2 \omega \| g_k^i - h_k^i \|_2^2\\
    &= \ls 1 - \gamma_k \rs \| h_k^i - h_*^i\|^2_2 + \gamma_k \|g_k^i - h_*^i \|^2_2+ \ls \gamma_k^2  \omega- \gamma_k (1-\gamma_k)\rs \|g_k^i - h_k^i \|_2^2. 
\end{align*}
\end{proof}

    \begin{customassum}{\ref{assum:Lsmooth}}
        The function $F$ is $L$-smooth.
    \end{customassum}
    
    \begin{customassum}{\ref{asymmetry}}
    (Bounded data dissimilarity). There exists constant $ \zeta \geq 0$ such that $\forall x \in \R^d$ 
    \begin{align}
        \frac{1}{n} \sum_{i=1}^n \| \nabla f_i(x)  - \nabla F(x)\|^2_2 \leq \zeta^2 
    \end{align}
    In particular, $\zeta = 0$, implies that all datasets stored in the $n$ devices are drawn from the same data distribution $\mathcal{D}$.
    \end{customassum}

\begin{customthm}{\ref{thm:nonconvex}}
    Let $S = \{ w_0, w_1, \dots, w_{k-1} \}$ be generated using Algorithm \ref{alg:main}, and $\bar{w}$ be sampled uniformily at random from $S$, for $\alpha \leq \ \sqrt{\frac{n}{2Lw(w+1)\mu_2^2}}$ and $\gamma_k \leq  \frac{1 + \sqrt{1 - \frac{2 L \alpha^2 w(w+1)\mu_2^2}{n}} }{2(w+1)}$, and a parameter $c$ such as $c < \frac{\mu_1}{L \alpha \gamma_k } - \frac{\mu^2_2}{2\gamma_k} $  we have:
    \begin{align}
        \EE{Q}{\|\nabla F(\bar{w})\|_2^2}  &\leq 2 \frac{\bk^0}{ k\alpha \ls 2\mu_1 -  L \alpha \mu_2^2   - 2c L \alpha \gamma _k\rs} +  \frac{4 c L \alpha  }{  2\mu_1 -  L \alpha \mu_2^2   - 2c L \alpha \gamma _k}  \zeta^2 \nonumber\\&+  \frac{\mu_2^2+ 2 c }{  2\mu_1 - \ls L \alpha \mu_2^2  \rs - 2c L \alpha } L  \sigma^2 \nonumber\\
    \end{align}
with $\bk^k = F(w_k) - F^* + c \frac{L \alpha^2}{2}  \frac{1}{n} \sum_{i=1}^{n} \|h_k^i - h_*^i \|^2_2$
\end{customthm}

\begin{proof}
    We have $w_{k+1} = w_k - \alpha_k A_k \wg_k$, therefore:
    \begin{align*}
        \EE{}{F(w_{k+1})} &= \EE{}{F(w_k - \alpha_k A_k \wg_k)}\\
        &\stackrel{\eqref{assum:strong_conv}}{\leq} \EE{}{F(w_{k})} - \alpha \EE{}{\la \nabla F(w_k), A_k \wg_k \ra} + \frac{L \alpha^2 }{2} \EE{}{\|A_k \wg_k\|^2}\\
        &\stackrel{\eqref{eq:bnd_Hess1},\eqref{eq:cg_lm1_1}}{\leq} \EE{}{F(w_{k})} - \alpha \mu_1 \EE{}{\|\nabla F(w_k)  \|^2} +   \frac{L \alpha^2 \mu_2^2 }{2} \EE{}{\|\wg_k\|^2}\\
        &= \EE{}{F(w_{k})} - \alpha \mu_1 \EE{}{\|\nabla F(w_k)  \|^2} +   \frac{L \alpha^2 \mu_2^2 }{2} \lp \EE{}{\|\wg_k - g_k \|^2} + \EE{}{\|g_k\|^2}\rp\\
        &\stackrel{\eqref{eq:cg_lm1_1}}{\leq} \EE{}{F(w_{k})} - \alpha \mu_1 \EE{}{\|\nabla F(w_k)  \|^2} +   \frac{L \alpha^2 \mu_2^2 }{2} \lp \frac{\omega}{n^2}\sum\limits_{i=1}^n \|g_k^i - h_k^i\|^2 + \EE{}{\|\nabla F(w_k)\|^2} + \sigma^2 \rp\\ 
        &= \EE{}{F(w_{k})} + \alpha\ls \frac{L \alpha \mu_2^2 }{2} -  \mu_1  \rs \EE{}{\|\nabla F(w_k)\|^2} + \frac{L \alpha^2 \mu_2^2 }{2} \frac{\omega}{n^2}\sum\limits_{i=1}^n \|g_k^i - h_k^i\|^2 + \frac{L \aa{\alpha^2} \mu_2^2 }{2} \sigma^2
    \end{align*}\\
Define $\bk^k = F(w_k) - F^* + c \frac{L \alpha^2}{2}  \frac{1}{n} \sum_{i=1}^{n} \|h_k^i - h_*^i  \|^2_2$, where $h_*^i = \nabla f_i(w^*) $.
\begin{align*}
    \frac{1}{n} \sum_{i=1}^{n} \EE{}{\| h_{k+1}^i - h_*^i  \|^2_2 | w_k} &\stackrel{\eqref{eq:error-feedback_second_moment}}{\leq} \frac{1}{n}  \sum_{i=1}^{n} \bigg[  \ls 1 - \gamma_k \rs \| h_k^i - h_*^i\|^2_2 + \gamma_k \|g_k^i - h_*^i \|^2_2+ \ls \gamma_k^2  \omega- \gamma_k (1-\gamma_k)\rs \|g_k^i - h_k^i \|_2^2 \bigg] \\
    &= \frac{1-\gamma_k}{n} \sum_{i=1}^{n}  \| h_k^i - h_*^i \|^2_2 + \frac{\gamma_k}{n} \sum_{i=1}^{n} \|g_k^i - h_*^i \|^2 + \frac{\ls \gamma_k^2  \omega- \gamma_k (1-\gamma_k)\rs}{n} \sum_{i=1}^{n} \|g_k^i - h_k^i \|_2^2 \\
    &\stackrel{\|a+b\|^2 \leq 2\|a\|^2 + 2\|b\|^2}{\leq}  \frac{1-\gamma_k}{n} \sum_{i=1}^{n}  \| h_k^i - h_*^i\|^2_2 + \frac{2\gamma_k}{n}  \sum_{i=1}^n\| g_k^i \|^2_2 + \\& \frac{2\gamma_k}{n}  \sum_{i=1}^n\| h_*^i   - \underbrace{\nabla F(w^*)}_{=0} \|^2_2 + \frac{\ls \gamma_k^2  \omega- \gamma_k (1-\gamma_k)\rs}{n} \sum_{i=1}^{n} \|g_k^i - h_k^i \|_2^2 \\
    &\leq  \frac{1-\gamma_k}{n} \sum_{i=1}^{n}  \| h_k^i - h_*^i\|^2_2 + \frac{2\gamma_k}{n}  \sum_{i=1}^n\| \nabla f_i(w_k) \|^2_2 + \frac{2\gamma_k}{n}  \sum_{i=1}^n \sigma_i^2  \\& +   \frac{2\gamma_k}{n}  \sum_{i=1}^n\| h_*^i  - \underbrace{\nabla F(w^*)}_{=0} \|^2_2 + \frac{\ls \gamma_k^2  \omega- \gamma_k (1-\gamma_k)\rs}{n} \sum_{i=1}^{n} \|g_k^i - h_k^i \|_2^2 \\
    &\stackrel{\eqref{asymmetry} + \eqref{eq:variance_decomp}}{\leq}  \frac{1-\gamma_k}{n} \sum_{i=1}^{n}  \| h_k^i - h_*^i\|^2_2 + 2 \gamma_k \|\nabla F(w_k) \|_2^2 + 4 \gamma_k \zeta^2 \\&+  \frac{\ls \gamma_k^2  \omega- \gamma_k (1-\gamma_k)\rs}{n} \sum_{i=1}^{n} \|g_k^i - h_k^i \|_2^2 + 2 \gamma_k \sigma^2 \\
\end{align*}
Therefore:
\begin{align*}
    \EE{}{\bk^{k+1}} &= \EE{}{F(w_{k+1})} - F^* + c \frac{L \alpha^2}{2} \frac{1}{n} \sum_{i=1}^{n} \EE{}{\|h_{k+1}^i - h_*^i   \|^2_2}\\
    &\leq  \EE{}{F(w_{k})} + \alpha\ls \frac{L \alpha \mu_2^2 }{2} -  \mu_1  \rs \EE{}{\|\nabla F(w_k)\|^2} + \frac{L \alpha^2 \mu_2^2 }{2} \frac{\omega}{n^2}\sum\limits_{i=1}^n \|g_k^i - h_k^i\|^2 + \frac{L \br{\alpha^2} \mu_2^2 }{2} \sigma^2  - F^*
    \\&  + c \frac{L \alpha^2}{2} \bigg[\frac{1-\gamma_k}{n} \sum_{i=1}^{n}  \| h_k^i - h_*^i\|^2_2 + 2 \gamma_k \|\nabla F(w_k) \|_2^2
    \\&+ 4 \gamma_k \zeta^2 +  \frac{\ls \gamma_k^2  \omega- \gamma_k (1-\gamma_k)\rs}{n} \sum_{i=1}^{n} \|g_k^i - h_k^i \|_2^2 + 2 \gamma_k \sigma^2 \bigg ]\\
    &= \EE{}{F(w_{k})} - F^* + c \frac{L \alpha^2 (1 - \gamma_k)}{2n} \sum_{i=1}^{n} \| h_k^i - h^*_i\|_2^2  - \alpha \ls \mu_1 - \frac{L \alpha \mu_2^2}{2} - c L \alpha \gamma_k  \rs \| \nabla F(w_k)\|^2_2 \\&+ 2 c L \alpha^2 \gamma_k \zeta^ 2 + \ls \frac{\mu_2^2}{2} +c\gamma_k \rs L \br{\alpha^2} \sigma^2 +\ls \underbrace{\frac{L \alpha^2 \mu_2^2 }{2}  \frac{\omega}{n^2} + \frac{\ls \gamma_k^2  \omega- \gamma_k (1-\gamma_k)\rs}{n}}_{:= T(\gamma_k, \alpha)} \rs \sum_{i=1}^{n} \|g_k^i - h_k^i \|_2^2
\end{align*}

A key moment in the proof is to notice that $T(\gamma_k, \alpha) \leq 0$ for our choice of $\gamma_k$ and $\alpha$.

In fact, we have: 
\begin{align*}
    T(\gamma_k, \alpha) \leq 0 &\Leftrightarrow \frac{1}{n} \ls \frac{L \alpha^2 \mu_2^2 }{2}  \frac{\omega}{n} + \ls \gamma_k^2  \omega- \gamma_k (1-\gamma_k)\rs \rs \leq 0\\
    &\Leftarrow  \begin{cases} 
      \alpha \leq \sqrt{\frac{n}{2Lw(w+1)\mu_2^2}} \\
      \gamma_k \leq \frac{\sqrt{1 - \frac{2 L \alpha^2 w(w+1)\mu_2^2}{n}} + 1}{2(w+1)}
   \end{cases}
\end{align*}

Therefore, we have: 
\begin{align*}
    \EE{}{\bk^{k+1}} &\leq \EE{}{F(w_{k})} - f^* +  c \frac{L \alpha^2 }{2n} \sum_{i=1}^{n} \| h_k^i - h^*_i\|_2^2  - \alpha \ls \underbrace{\mu_1 - \frac{L \alpha \mu_2^2}{2} - c L \alpha \gamma_k}_{ > 0 \text{ by our condition on } c} \rs \| \nabla F(w_k)\|^2_2  \\&+2 c L \alpha^2 \gamma_k \zeta^ 2 + \ls \frac{\mu_2^2}{2} +c\gamma_k \rs L \br{\alpha^2} \sigma^2\\
    \EE{}{\bk^{k+1}} &\leq \EE{}{\bk^{k}} - \alpha \ls \mu_1 - \frac{L \alpha \mu_2^2}{2} - c L \alpha  \gamma_k\rs \| \nabla F(w_k)\|^2_2  \\&+2 c L \alpha^2 \gamma_k \zeta^ 2 + \ls \frac{\mu_2^2}{2} +c\gamma_k \rs L \br{\alpha^2} \sigma^2\\
\end{align*}

Therefore:
\begin{align*}
    \EE{}{\|\nabla F(w_k)\|_2^2} &\leq 2 \frac{\EE{Q}{\bk^k} - \EE{Q}{\bk^{k+1}}}{ \alpha \ls 2\mu_1 - \ls L \alpha \mu_2^2  \rs - 2c L \alpha \gamma_k\rs} +  \frac{4 c L \alpha^2 \gamma_k }{\alpha \ls 2\mu_1 - \ls L \alpha \mu_2^2  \rs - 2c L \alpha \gamma_k\rs}  \zeta^2 \\&+  \frac{\mu_2^2+ 2 c\gamma_k }{\alpha \ls 2\mu_1 - \ls L \alpha \mu_2^2  \rs - 2c L \alpha \gamma_k \rs} L \br{\alpha^2} \sigma^2\\
\end{align*}

Summing from $0$ and $k-1$  , simplifying the telescopic terms yiels:

\begin{align*}
   \sum_{j=0}^{k-1} \EE{}{\|\nabla F(w_j)\|_2^2} &\leq 2 \frac{\bk^0 - \EE{Q}{\bk^{k}}}{ \alpha \ls 2\mu_1 -  L \alpha \mu_2^2   - 2c L \alpha \gamma _k\rs} + k \frac{4 c L \alpha^2  }{\alpha \ls 2\mu_1 -  L \alpha \mu_2^2   - 2c L \alpha \gamma _k\rs}  \zeta^2 \\&+  k\frac{\mu_2^2+ 2 c \gamma_k }{\alpha \ls 2\mu_1 -  L \alpha \mu_2^2   - 2c L \alpha \gamma _k\rs} L \br{\alpha^2} \sigma^2\\
\end{align*}
Finally:
\begin{align*}
    \frac{1}{k}\sum_{j=0}^{k-1} \EE{}{\|\nabla F(w_j)\|_2^2} &\leq 2 \frac{\bk^0 - \EE{Q}{\bk^{k}}}{ k\alpha \ls 2\mu_1 -  L \alpha \mu_2^2   - 2c L \alpha \gamma _k\rs} +  \frac{4 c L \alpha  }{ 2\mu_1 -  L \alpha \mu_2^2   - 2c L \alpha \gamma _k}  \zeta^2 \\&+  \frac{\mu_2^2+ 2 c \gamma_k }{ 2\mu_1 -  L \alpha \mu_2^2   - 2c L \alpha \gamma _k} L \br{\alpha} \sigma^2\\
\end{align*}
We can drop $\EE{Q}{\bk^k}$ because it is positive and that concludes the proof.
\end{proof}

\begin{customcol}{\ref{cor:nonconvex}}
 Set $\gamma_k = \gamma$  , $\alpha = \frac{2 \mu_1 - 1}{L(\mu_2^2 + 2c \gamma) \sqrt{K}}$ and $h_0 = 0$, after $K$ iterations of algorithm \ref{alg:main}, in the nonconvex setting, the error $\epsilon$ is at worst $\frac{2}{\sqrt{K}} \frac{ L(\mu_2^2 + 2c \gamma)}{  \ls 2 \mu_1 - 1 \rs } \bk^0 + \frac{1}{\sqrt{K}} \frac{4 c \ls 2 \mu_1 - 1 \rs   }{ \mu_2^2 + 2c \gamma}  \zeta^2 + \frac{1}{\sqrt{K}} \frac{\ls\mu_2^2+ 2 c \gamma\rs \ls 2\mu_1 - 1\rs }{ \mu_2^2 + 2c \gamma }  \sigma^2$.
\end{customcol}
\begin{proof}
    It's easy to see that by our choice of $\gamma_k,  \alpha$ and $h_0$\\
    we have $2\mu_1 -  L \alpha \mu_2^2   - 2c L \alpha \gamma _k \geq 1$\\
    Therefore, after the $K$ steps, the error $\epsilon$ is upper bounded by:
    
     \begin{align*}
         &2 \frac{\bk^0}{ k\alpha \ls 2\mu_1 -  L \alpha \mu_2^2   - 2c L \alpha \gamma _k\rs} +  \frac{4 c L \alpha  }{ 2\mu_1 -  L \alpha \mu_2^2   - 2c L \alpha \gamma _k}  \zeta^2 +  \frac{\mu_2^2+ 2 c \gamma_k }{ 2\mu_1 -  L \alpha \mu_2^2   - 2c L \alpha \gamma _k} L \br{\alpha} \sigma^2 
         \\ &\leq \frac{2}{\sqrt{K}} \frac{ L(\mu_2^2 + 2c \gamma)}{  \ls 2 \mu_1 - 1 \rs } \bk^0 + \frac{1}{\sqrt{K}} \frac{4 c \ls 2 \mu_1 - 1 \rs   }{ \mu_2^2 + 2c \gamma}  \zeta^2 + \frac{1}{\sqrt{K}} \frac{\ls\mu_2^2+ 2 c \gamma\rs \ls 2\mu_1 - 1\rs }{ \mu_2^2 + 2c \gamma }  \sigma^2
     \end{align*}
    
\end{proof}

\end{document}